\newtheorem{definition}{Definition}[section]
\newtheorem{assumption}{Assumption}[section]
\newtheorem{lemma}{Lemma}[section]
\newtheorem{theorem}{Theorem}[section]
\declaretheorem[name=Proposition,numberwithin=section]{prop}
\newcommand{\cN}{\mathcal{N}}
\newcommand{\bE}{\mathbb{E}}
\newcommand{\bI}{\mathbb{I}}
\newcommand{\bP}{\mathbb{P}}
\newcommand{\bR}{\mathbb{R}}
\date{}
\title{\Large{Smoothing-Based Conformal Prediction for \\ Balancing Efficiency and Interpretability}}
\author{
  Mingyi Zheng\thanks{Equal Contribution} \ \thanks{Shanghai University of Finance and Economics}
  \and
  Hongyu Jiang$^{*  \dag}$
  \and
  Yizhou Lu$^*$\thanks{Fudan University}
  \and
  Jiaye Teng$^{\dag}$\thanks{Correspondence to \texttt{tengjiaye@sufe.edu.cn}}\\
}
\begin{document}

\maketitle

\begin{abstract}
\label{sec:abs}

Conformal Prediction (CP) is a distribution-free framework for constructing statistically rigorous prediction sets. 
While popular variants such as CD-split improve CP's efficiency, they often yield prediction sets composed of multiple disconnected subintervals, which are difficult to interpret. 
In this paper, we propose SCD-split, which incorporates smoothing operations into the CP framework. 
Such smoothing operations potentially help merge the subintervals, thus leading to interpretable prediction sets. 
Experimental results on both synthetic and real-world datasets demonstrate that SCD-split balances the interval length and the number of disconnected subintervals. Theoretically, under specific conditions, SCD-split provably reduces the number of disconnected subintervals while maintaining comparable coverage guarantees and interval length compared with CD-split.

\end{abstract}

\section{Introduction}

\label{sec:intro}

Machine learning models have achieved remarkable success across numerous applications, including large language models~\citep{chang2024survey}, medical diagnosis~\citep{marcinkevičs2022introductionmachinelearningphysicians}, and investment~\citep{Papasotiriou_2024}. Despite the impressive performance and widespread adoption, they are often sensitive to noise, model misspecification, and inference errors~\citep{Abdar2021uncertainty}, which undermine the prediction reliability and thus limit their practical applicability in high-stakes scenarios~\citep{nguyen2015deep,hein2019relu,martino2023knowledge,huang2025survey}. Consequently, this has raised interest in developing rigorous methods for \emph{uncertainty quantification} to enhance the trustworthiness of machine learning outputs~\citep{guo2017calibration,kristiadi2020being}.


Among various uncertainty quantification approaches, conformal prediction~(CP) has emerged as a powerful and versatile framework that provides statistically rigorous uncertainty guarantees under mild assumptions~\citep{vovk2005algorithmic,romano2019conformalized,DBLP:journals/corr/abs-2107-07511}. CP wraps around black-box predictive models and outputs prediction sets whose validity is ensured by data exchangeability, without requiring knowledge of the underlying data distribution. Owing to its strong theoretical guarantees and model-agnostic flexibility, conformal prediction has demonstrated promising and growing applicability across a variety of fields, including drug discovery~\citep{laghuvarapu2023conformaldrugpropertyprediction}, large language model~\citep{gui2024conformalalignmentknowingtrust}, and health care~\citep{eghbali2024distributionfreeuncertaintyquantificationmechanical}.

Beyond guaranteeing the coverage, conformal prediction is expected to produce prediction sets with smaller lengths to enable more informative uncertainty quantification in practice. Consequently, many approaches have been proposed to improve length efficiency under the conformal prediction framework~\citep{romano2019conformalizedquantileregression,teng2023predictive,izbicki2019flexibledistributionfreeconditionalpredictive}.
Among them, CD-split~\citep{izbicki2021cdsplithpdsplitefficientconformal} stands out due to its strong performance in improving length efficiency, which uses the conditional density estimation as the conformity score. It approximately achieves prediction sets with minimal Lebesgue measure while maintaining valid coverage when the conditional density estimation is accurate~\citep{izbicki2019flexibledistributionfreeconditionalpredictive}.

Despite the strong theoretical properties of CD-split, several practical challenges arise when it is applied to real-world scenarios. When the conditional distribution is complex or highly multimodal, the prediction sets generated by CD-split often consist of many small disconnected intervals\footnote{We here restrict our discussions to the regression tasks (See Appendix~\ref{appendix:other disc} for more discussions).}~(see Figure~\ref{fig:illustration of smooth in number}). 
The lack of connectivity makes the prediction sets difficult to interpret and thus limits their usefulness in practical tasks where clear and concise predictions are preferred. 
\begin{figure}[t]
  \centering
  \includegraphics[width=0.7\linewidth]{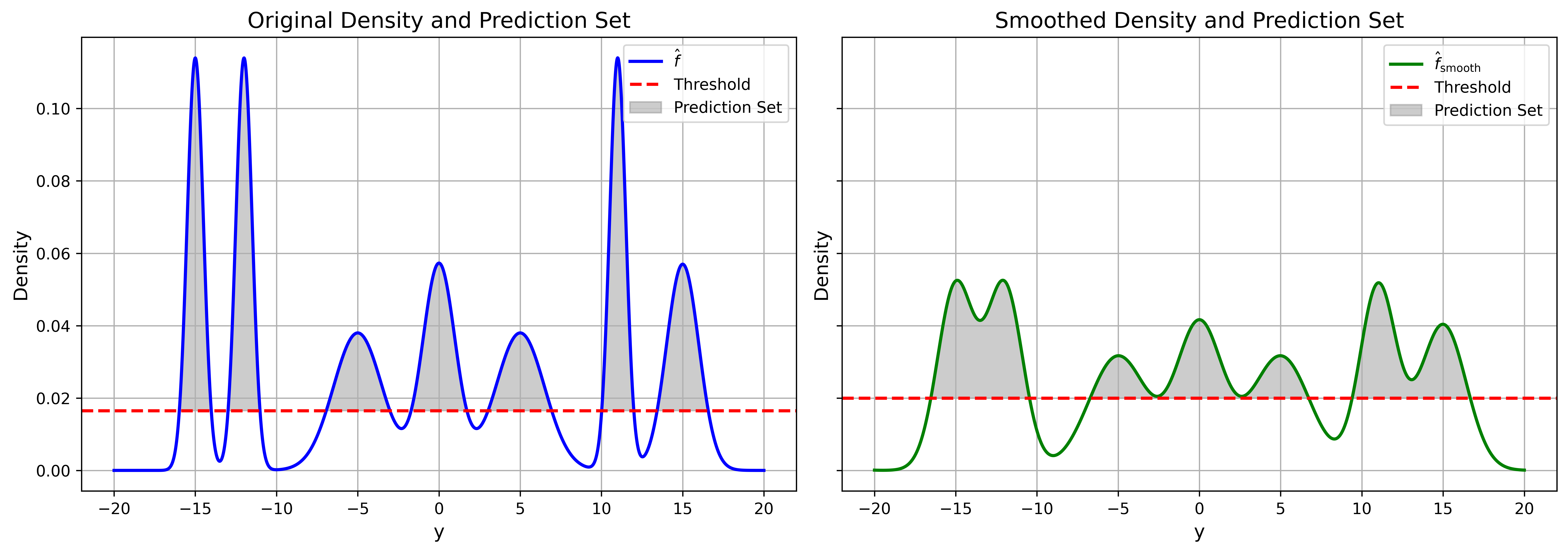}
  \caption{Illustration of Fourier smoothing on a synthetic multimodal distribution. The original density (left) contains seven sharp peaks, resulting in prediction sets composed of multiple disconnected intervals. After applying a smoothing technique (right), the number of intervals is reduced to three with a mild increase in total length, improving the interpretability of the prediction sets.}
  \label{fig:illustration of smooth in number}
\end{figure}

In this paper, we propose \emph{SCD-split} to address the above challenges. SCD-split explicitly focuses on the interpretability of prediction sets. Specifically, interpretability measures how clearly and intuitively the prediction sets convey information to users, which is quantified using both interval length and the number of disjoint intervals~(connectivity). Within SCD-split, users first specify a desired number of disjoint intervals that they regard as appropriate for interpretation. To meet this requirement, SCD-split applies a smoothing technique to the fitted conditional density function before constructing the prediction sets. This smoothing step reduces unnecessary peaks in the estimated density function and corrects distortions caused by complex noise or overfitting, making the density estimation more stable and meaningful. We further use the validation process to tune the smoothing parameter so that the final prediction set conforms to the specified number of intervals as closely as possible. As a result, the final prediction sets contain disconnected intervals matching users' desired number, making them easier to interpret while maintaining the coverage guarantee. While smoothing has long been a standard tool, this is the first work to introduce it into conformal prediction frameworks to directly regulate the connectivity of prediction sets, thereby providing a novel and principled approach to shape their structure. We refer to Figure~\ref{fig:illustration of smooth in number} for a visual illustration.


\textbf{Theoretically}, we first prove in Theorem~\ref{thm: coverage} that the proposed smoothing procedure preserves the marginal coverage guarantee of conformal prediction. Second, under general conditions, we establish that smoothing techniques lead to controlled behaviors: the length of the prediction sets admits a provable upper bound (Theorem~\ref{thm: interval length}), and the number of disconnected intervals does not increase after smoothing (Theorem~\ref{thm: interval number}). Third, we prove that smoothing strictly reduces the number of intervals under specific structural assumptions—such as narrow-valley double peaks (Theorem~\ref{thm: narrow valley}), thereby improving the interpretability of the prediction sets without sacrificing coverage.
\textbf{Empirically}, we evaluate our method on both synthetic and real-world datasets in Section~\ref{experiment}. The results show that our method achieves a favorable trade-off between interval length and the number of intervals while maintaining validity, particularly under complex and multimodal distributions. Such balance leads to a notable improvement in the interpretability of the prediction sets.


In many practical problems, controlling the number of disjoint intervals in prediction sets is essential for making the results interpretable and actionable. We briefly present two motivating examples:

\textbf{Health Care.} In medical prognosis, when a disease’s course is highly uncertain, doctors often face diseases whose future course may branch into a few qualitatively different trajectories—for example, a fast‐progressing fatal path and a long-term recovery path. For treatment planning and patient counseling, it is important to express these distinct possibilities through a manageable number of separate time ranges, rather than a single broad interval that mixes them together or an excessive number of small intervals that are difficult to interpret.
 Our method allows physicians to directly specify the desired number of disjoint intervals so that the resulting conformal prediction sets provide interpretable and clinically actionable uncertainty quantification.

\textbf{Finance.} In stock price forecasting, market conditions may be highly uncertain, and investors may face two qualitatively different outcomes: a strong upward movement or a significant decline. In such cases, they often wish to know two separate ranges with higher accuracy—one indicating how high the price may rise if the market strengthens, and another showing how low it may fall if conditions worsen. Our method allows investors to pre-specify the desired number of disjoint intervals, so that the resulting conformal prediction set clearly distinguishes these up-side and down-side scenarios and provides more concrete guidance for trading and risk management.

\textbf{Contributions.} Our main contributions are summarized as follows:

\begin{itemize}
    \item We introduce the number of intervals as a new metric, complementing interval length, to more comprehensively characterize the interpretability of prediction sets.
    
    \item We propose a smoothing-based method in Section~\ref{methodology} that regularizes the estimated conditional density function, reducing unnecessary peaks and ensuring that the number of disjoint intervals in the prediction set is closer to the target number. This improves the interpretability of the resulting prediction sets. Besides, our smoothing approach is general and can be flexibly integrated into any conformal prediction method based on conditional density estimation, including CD-split and HPD-split.
    

    \item Theoretical evidence in Section~\ref{theory} shows that SCD-split is (a) valid, where the empirical coverage is larger than or equal to \(1-\alpha\), (b) efficient, where interval length is still acceptable, and (c) connective, where the number of intervals decreases under special structural cases.
    
    \item We conduct comprehensive experiments on both synthetic and real-world datasets in Section~\ref{experiment}. The results demonstrate that our method achieves a favorable trade-off between interval length and number, leading to the better interpretability.
\end{itemize}


    
    

    

\section{Related Work}

\textbf{Conformal Prediction.}
Conformal prediction~\citep{vovk2005algorithmic,DBLP:journals/jmlr/ShaferV08,Barber2020TheLO} is a statistical framework that turns black-box model outputs into predictive intervals. 
It offers several desirable properties, including distribution-free, non-asymptotic guarantees and a user-friendly implementation~\citep{DBLP:journals/corr/abs-2107-07511}. 
Existing research on conformal prediction mainly focuses on two aspects: interval length and coverage guarantee. Interval length is an important metric measuring the performance of conformal prediction methods~\citep{teng2023predictive,DBLP:conf/icml/TengTY21,angelopoulos2020sets,zhou2024conformalclassificationequalizedcoverage}. To minimize the predicted interval length, researchers try to build adaptive prediction intervals~\citep{DBLP:conf/nips/RomanoPC19,lu2024densitycalibratedconformalquantileregression}, modify non-conformity scores~\citep{izbicki2019flexibledistributionfreeconditionalpredictive,wang2025conformalinferenceindividualtreatment} or regard interval length as the optimization objective~\citep{stutz_learning_2022,NEURIPS2024_b41907dd,bars_volume_2025}. For coverage guarantee, numerous works focus on improving the conditional coverage~\citep{romano2019conformalized,gibbs_conformal_2024,plassier_rectifying_2025}. 
Unfortunately, conditional coverage holds only on some special distributions~\citep{vovk2012conditional,lei2014distribution,Barber2020TheLO}. Therefore, work on conditional coverage can be roughly split into two branches: (a)~\textit{local coverage}~\citep{lei2014distribution,Barber2020TheLO,guan2023localized} controls the conditional coverage in a pre-selected space; (b)~\textit{asymptotic coverage}~\citep{lei2018distribution,izbicki2019flexibledistributionfreeconditionalpredictive,NEURIPS2021_31b3b31a} establishes conditional coverage guarantees that hold asymptotically as the sample size tends to infinity.

\textbf{Conformal Prediction and Interpretability.}
Conformal prediction has been used to enhance the interpretability of the model~\citep{JOHANSSON2018394,sanchezmartin2024improvinginterpretabilitygnnpredictions,qian2024modelinguncertaintiesselfexplainingneural} in various fields that need interpretability and reliability, \emph{e.g.}, medicine~\citep{lu2022fairconformalpredictorsapplications,hirsch2024conformalizedlearningpredictionset,huang2025confineconformalpredictioninterpretable} and finance~\citep{zaffran2022adaptiveconformalpredictionstime}.
However, the interpretability of conformal prediction techniques is still underexplored, \emph{e.g.}, confidence intervals with multiple disconnected intervals may potentially influence the interpretability of conformal prediction.


\textbf{Smoothing.} 
Smoothing methods have been used in various fields, \emph{e.g.}, computer vision~\citep{wang2022antioversmoothingdeepvisiontransformers}, statistics~\citep{chacón2013fouriermethodssmoothdistribution,ho2020multivariatesmoothingfourierintegral} and numerical analysis~\citep{pandey2020fouriersmoothedprecorrectedtrapezoidal}. 
In this paper, we mainly focus on Fourier smoothing and randomized smoothing. 
Fourier smoothing uses different frequency-domain filters, \emph{e.g.}, ideal low-pass filter~(ILPF)~\citep{jeon2024pacfnoparallelstructuredallcomponentfourier}, Gaussian low-pass filter~(GLPF)~\citep{mehrabkhani2019fouriertransformapproachmachine,mehrabkhani2022fouriertransformapproachmachine}, Butterworth low-pass filter~(BLPF)~\citep{xiao2025electroencephalogramemotionrecognitionauc} and window functions~\citep{ohamouddou2025introducingshorttimefourierkolmogorov}.
Randomized smoothing has been used for constructing adversarial robustness classifiers~\citep {cohen2019certifiedadversarialrobustnessrandomized,tengl1}.
\citet{gendler2021adversarially,yan2024provablyrobustconformalprediction} introduce randomized smoothing into conformal prediction and propose randomized smoothing conformal prediction~(RSCP), which is a robust conformal prediction framework under adversaries. 
Unlike RSCP which applies randomized smoothing at the input level to improve robustness against adversarial perturbations, our method smooths the estimated conditional density function to improve interpretability.

\textbf{Conditional Density Estimation}~(CDE). CDE is a challenging problem in modern statistical inference, especially in high-dimensional regimes~\citep{izbicki2017convertinghighdimensionalregressionhighdimensional}.
CDE methods can be grouped into three categories: (a)~\textit{parametric methods} assume that $p(y\mid x)$ follows a specific family of distributions~(\emph{e.g.}, Gaussian, Exponential) and use maximum likelihood estimation to determine parameters~\citep{10.5555/1162264}; (b)~\textit{non-parametric methods} calculate the conditional density using the ratio of the joint kernel density
 estimate to the marginal kernel density estimate~\citep{2b9ef0ca-cce3-3847-9495-6fe84502bc23,https://doi.org/10.1111/1467-9574.00226,10.1093/erae/jbn027}. Several works based on this method focus on using different methods to tune parameters~\citep{ICHIMURA20103404,holmes2012fastnonparametricconditionaldensity}. Other approaches include different regression methods~\citep{8e8b8752-2946-39cf-8467-9801779912f6,10.1162/neco.2008.10-07-628,izbicki2017convertinghighdimensionalregressionhighdimensional} and least-square~\citep{pmlr-v9-sugiyama10a}; (c)~\textit{neural network based methods} combine neural networks with mixture density models called Mixture Density Networks~(MDN)~\citep{rothfuss2019conditionaldensityestimationneural} or combine neural networks with non-parametric methods called Kernel Mixture Networks~(KMN)~\citep{ambrogioni2017kernelmixturenetworknonparametric}. Another promising method of neural networks based CDE is normalizing flow~\citep{trippe2018conditionaldensityestimationbayesian,9089305}.

\section{Methodology}
\label{methodology}

In this section, we propose our SCD-split framework to improve the interpretability of prediction sets. We first review the classical split conformal prediction and its extension to density-based methods in Section~\ref{cde}. Then, we introduce a Fourier-based smoothing technique to regularize the estimated conditional densities in Section~\ref{smooth}. Finally, we summarize the complete SCD-split procedure in Section~\ref{algo}.

\subsection{Conformal prediction based on conditional density estimator}
\label{cde}


\textbf{Split conformal prediction} is a commonly adopted method in conformal prediction, which constructs valid prediction sets through a data-splitting procedure. Specifically, the dataset is divided into two disjoint subsets: training set $\mathcal{D}_{\text{tr}}$ and calibration set $\mathcal{D}_{\text{ca}}$. A predictor $\hat{f}$ is trained on $\mathcal{D}_{\text{tr}}$, and non-conformity scores $V(X_i, Y_i)$ are computed on $\mathcal{D}_{\text{ca}}$ symmetrically. This validity property relies on a mild assumption about the data: the exchangeability of the data pairs in Assumption~\ref{assum: CP}.
\begin{assumption}[Exchangeability]
    \label{assum: CP}
    Define $\{Z_i\}_{i=1}^{n}$, as the data pairs, then $Z_i$ are exchangeable if arbitrary permutation follows the same distribution, i.e.,
    \begin{equation}
        (Z_1,\dots,Z_{n}) \overset{d}{=} (Z_{\pi(1)},\dots,Z_{\pi(n)}),
    \end{equation}
    with arbitrary permutation $\pi$ over $\{1,\dots,n\}$.
\end{assumption}
This setup ensures that the non-conformity score $V_{n+1}$ for a new test point is exchangeable with the scores in $\mathcal{D}_{\text{ca}}$, which in turn implies that the rank of $V_{n+1}$ among ${V_1, V_2, \dots, V_{n+1}}$ is uniformly distributed. Consequently, a valid prediction set can be formed using a quantile-based threshold:
\begin{equation}
\mathcal{C}_{1-\alpha}(X_{n+1}) = \left\{ y : V(X_{n+1}, y) \leq \text{Quantile}(1-\alpha; \{V_i\}_{i \in\mathcal{I_{\text{ca}}}} \cup \{+\infty \}) \right\},
\end{equation}
where $\mathcal{I}_{\text{ca}}$ denotes the index set corresponding to the calibration set $\mathcal{D}_{\text{ca}}$.
This approach guarantees marginal coverage for prediction sets:

\begin{equation}
\mathbb{P}(Y_{n+1} \in \mathcal{C}_{1-\alpha}(X_{n+1})) \geq 1 - \alpha.
\end{equation}

\textbf{CD-split.} Within the framework of split conformal prediction, a notable class of methods constructs prediction sets via conditional density estimation~\citep{izbicki2021cdsplithpdsplitefficientconformal, izbicki2019flexibledistributionfreeconditionalpredictive}. These methods train a conditional density estimator $\hat{f}(y \mid x)$ on the training set $\mathcal{D}_{\text{tr}}$ and compute conformity scores directly as $\{ \hat{f}(y_i \mid x_i), i \in \mathcal{I_{\text{ca}}} \cup \{n+1 \} \}$ using the calibration set $\mathcal{D}_{\text{ca}}$ and the test point. Given the exchangeability of the data points $\{ Z_i \}_{i=1}^{n+1}$, these conformity scores are also exchangeable, enabling the construction of valid prediction sets under the split conformal prediction framework. 

 Among them, CD-split is a representative approach that exemplifies this strategy~\citep{izbicki2021cdsplithpdsplitefficientconformal}. Building on this framework, CD-split further clusters the input space and, when constructing prediction sets, uses only the calibration data that are similar to each test point. Through these mechanisms, CD-split constructs prediction sets that asymptotically converge to the oracle highest predictive density set (Proposition~\ref{prop:hpd} in~\citet{izbicki2021cdsplithpdsplitefficientconformal}). This property enables CD-split to produce smaller prediction sets compared to interval-based methods when the conditional density estimation performs well, thereby yielding improvements in efficiency. However, such mechanism may produce disconnected prediction sets under multimodal distributions, which hinders interpretability and poses challenges in practical applications. We refer to Section~\ref{experiment} for more details.

\subsection{Smoothing technique}
\label{smooth}


This section introduces the basics of smoothing techniques.
Despite the efficiency advantages of CD-split, it potentially yields prediction sets with multiple disconnected intervals when the estimated conditional density is complex or highly multimodal. To address this issue, we introduce a Fourier-based smoothing technique which regularizes the estimated density function. 
Specifically, the Fourier transform in Definition~\ref{def:fourier_smoothing} utilizes the powerful frequency-domain representation of functions to reduce noise and high-frequency oscillations in the estimated conditional density.

\begin{definition}[Fourier Smoothing for Conditional Density Estimation]
\label{def:fourier_smoothing}
Let \(\hat{f}(y \mid x)\) be an estimated conditional density.  
We apply Fourier smoothing in the response variable \(y\) as follows:  
First compute the Fourier transform of \(\hat{f}(y \mid x)\) with respect to \(y\): $
\mathcal{F}_y[\hat{f}](w \mid x)
    = \int_{-\infty}^{\infty} \hat{f}(y \mid x)\, e^{-2\pi i y w}\, dy.
$
Then multiply this transform by a Gaussian low-pass filter $H_\sigma(w) = e^{-2\pi^2 \sigma^2 w^2}$,
where the smoothing parameter \(\sigma>0\) controls the strength of smoothing.  
The smoothed spectrum is
$
\mathcal{F}_y[\hat{f}]^{\mathrm{FS}}(w \mid x)
    = \mathcal{F}_y[\hat{f}](w \mid x)\, H_\sigma(w),
$
and the smoothed conditional density is obtained by the inverse transform
\begin{equation}
\tilde{f}^{\mathrm{FS}}(y \mid x)
    = \int_{-\infty}^{\infty}
      \mathcal{F}_y[\hat{f}]^{\mathrm{FS}}(w \mid x)\,
      e^{2\pi i y w}\, dw.
\end{equation}
\end{definition}

The key insight is that sharp variations or spurious peaks in the density function correspond to high-frequency components in its spectral representation. By applying Fourier smoothing, we reduce the number of local modes in $\hat{f}(y \mid x)$, especially those arising from estimation noise. This has a direct impact on the structure of the prediction sets generated by CD-split: the number of disjoint intervals is reduced, and the resulting sets are more concise and easier to present and interpret, while still preserving valid coverage guarantees.


\subsection{SCD-split algorithm}
\label{algo}

The proposed algorithm integrates a smoothing technique into the CD-split framework, aiming to enhance the interpretability of prediction sets while preserving CD-split’s desirable theoretical properties, such as local conditional coverage. Each step of the pipeline is described below in detail: 
\textbf{Dataset.}  
Consider an exchangeable dataset \(\mathcal{D}=\{(\mathbf{X}_{i},Y_{i})\}_{i=1}^{n}\).  
We randomly split \(\mathcal{D}\) into three parts: a training set \(\mathcal{D}_{\text{tr}}\) for fitting the conditional density, a validation set \(\mathcal{D}_{\text{val}}\) for tuning the smoothing parameter, and a calibration set \(\mathcal{D}_{\text{ca}}\) for constructing the final prediction sets.  
We further assume that the test pair \((\mathbf{X}_{n+1},Y_{n+1})\) is exchangeable with \(\mathcal{D}\).

\textbf{Choice of target interval number.}  
The target interval number $K_{\text{target}}$ is predetermined by the user based on domain knowledge and the specific requirements of the application.  
This user-specified quantity serves as a way to incorporate prior understanding of the problem’s structural characteristics into the modeling process, ensuring that the resulting prediction sets are interpretable.

 \textbf{Training process.} We utilize the machine learning model, such as random forest or neural networks, to train a model via the training set $\mathcal{D}_{\text{tr}}$.

\textbf{Choosing the smoothing parameter  $\sigma$.}  
We select the smoothing parameter \(\sigma\) by evaluating all candidate values on the validation set and choosing the one whose prediction sets yield an average number of disjoint intervals closest to the user-specified target \(K_{\mathrm{target}}\).  

\emph{First}, for each candidate value \(\sigma\), we smooth the estimated density \(\hat{f}\) by applying the Fourier smoothing operator \(s_{\sigma}\), obtaining \(\tilde{f}^{\mathrm{FS}}_{\sigma}=s_{\sigma}(\hat{f})\). Based on \(\tilde{f}^{\mathrm{FS}}_{\sigma}\), we compute conditional CDF profiles and perform \(k\)-means++ clustering on the training covariates based on the profile distance (Definition~\ref{def:profile} in~\citep{izbicki2021cdsplithpdsplitefficientconformal}) to form a partition \(\mathcal{A}_{\sigma}\) of the input space \(\mathcal{X}\).


\emph{Second}, we use the calibration set \(\mathcal{D}_{\mathrm{ca}}\) to construct provisional conformal thresholds for each cell of \(\mathcal{A}_{\sigma}\). These thresholds allow us to form prediction sets on the validation set. Then, we record the number of disjoint intervals for every validation point. Finally, we compute the difference between the average number of intervals and the user-specified target \(K_{\mathrm{target}}\), and select the \(\sigma\) whose prediction sets best match this target. The chosen \(\sigma\) determines both the final smoothed estimator \(\tilde{f}^{\mathrm{FS}}\) and the final partition \(\mathcal{A}\). We further discuss the choice of loss function in Appendix~\ref{appendix:other analysis}.

 \textbf{Calibration and testing process.}  
 For a new test point \(\mathbf{X}_{n+1}\), we locate the cell \(a(\mathbf{X}_{n+1})\in\mathcal{A}\) containing it and form the final conformal prediction set \(\mathcal{C}^{S}_{1-\alpha}(\mathbf{X}_{n+1})\) using the smoothed density \(\tilde{f}^{\mathrm{FS}}\) and the corresponding calibrated threshold.

\begin{algorithm}[t]
\caption{SCD-split}
\label{alg: SCD-split}
\begin{algorithmic}[1]
\REQUIRE Dataset \(\mathcal{D}=\{(\mathbf{X}_{i},Y_{i})\}_{i=1}^{n}\), confidence level \(1-\alpha\in(0,1)\), training algorithm \(\mathcal{T}\) for conditional density, smoothing operator \(s_{\sigma}(\cdot)\), candidate grid \(\Sigma\) for \(\sigma\), prespecified target number of intervals \(K_{\mathrm{target}}\).
\STATE Randomly split \(\mathcal{D}\) into training \(\mathcal{D}_{\mathrm{tr}}\), validation \(\mathcal{D}_{\mathrm{val}}\), and calibration \(\mathcal{D}_{\mathrm{ca}}\).
\STATE Fit $\hat{f}= \mathcal{T}(\mathcal{D}_{\text{tr}})$ where $\hat{f}(Y_{i}\mid \mathbf{X}_{i})$ is the estimated conditional density;
\FOR{each \(\sigma\in\Sigma\)}
    \STATE Smooth the estimator: \(\tilde{f}^{\mathrm{FS}}_{\sigma}(y\mid \mathbf{x}) \leftarrow s_{\sigma}\!\big(\hat{f}(y\mid \mathbf{x})\big)\).
    \STATE Compute a partition \(\mathcal{A}_{\sigma}\) of \(\mathcal{X}\) by clustering training samples via profile distance (Def.~\ref{def:profile}).
    \STATE For each cell \(a\in\mathcal{A}_{\sigma}\), form $\mathrm{U}_{\sigma}(a)=\big\{\tilde{f}^{\mathrm{FS}}_{\sigma}(Y_{i}\mid \mathbf{X}_{i}) : (\mathbf{X}_{i},Y_{i})\in \mathcal{D}_{\mathrm{ca}},\; \mathbf{X}_{i}\in a \big\}$
    and compute threshold \(t^{S}_{\sigma}(a)=\mathrm{Quantile}\!\left(\alpha;\,\mathrm{U}_{\sigma}(a)\right)\).
    \STATE For each \((\mathbf{X}_{j},Y_{j})\in\mathcal{D}_{\mathrm{val}}\), find its cell \(a_{j}\in\mathcal{A}_{\sigma}\), construct $\mathcal{C}^{S}_{\sigma}(\mathbf{X}_{j}) = \{ y:\tilde{f}^{\mathrm{FS}}_{\sigma}(y\mid \mathbf{X}_{j}) \ge t^{S}_{\sigma}(a_{j}) \}$,
    and record the number of disjoint intervals \(N_{\sigma}(\mathbf{X}_{j})\).
    \STATE Compute $R(\sigma)=\big|\frac{1}{|\mathcal{D}_{\mathrm{val}}|}\sum_{(\mathbf{X}_{j},Y_{j})\in \mathcal{D}_{\mathrm{val}}} N_{\sigma}(\mathbf{X}_{j})-K_{\mathrm{target}}\big|$.
\ENDFOR
\STATE Select \(\hat{\sigma}\in\arg\min_{\sigma\in\Sigma} R(\sigma)\); set \(\tilde{f}^{\mathrm{FS}}\leftarrow \tilde{f}^{\mathrm{FS}}_{\hat{\sigma}}\) and \(\mathcal{A}\leftarrow \mathcal{A}_{\hat{\sigma}}\).
\STATE Find the partition $a(\mathbf{X}_{n+1})\in \mathcal{A}$ with $\mathbf{X}_{n+1}\in a(\mathbf{X}_{n+1})$;
\STATE Form the set $\mathrm{U}(\mathbf{X}_{n+1},\mathcal{D}_{\text{ca}})=\{\tilde{f}^\text{FS}(Y_{i}\mid \mathbf{X}_{i}):(\mathbf{X}_{i},Y_{i})\in \mathcal{D}_{\mathrm{ca}},\  \mathbf{X}_{i}\in a(\mathbf{X}_{n+1})\}$;
\STATE Compute $t^{S}=\text{Quantile}(\alpha;\mathrm{U}(\mathbf{X}_{n+1},\mathcal{D}_{\text{ca}}))$;
\ENSURE Prediction set $\mathcal{C}_{1-\alpha}^{S}(\mathbf{X}_{n+1})=\{y:\tilde{f}^\text{FS}(y\mid \mathbf{X}_{n+1})\geq t^{S}\}$.
\end{algorithmic}

\end{algorithm}

\section{Theoretical guarantee}
\label{theory}
This section presents theoretical guarantees for SCD-split. Specifically, Theorem~\ref{thm: coverage} establishes the finite-sample coverage guarantee of prediction sets. Theorem~\ref{thm: interval number} demonstrates that the smoothing operation does not increase the number of disconnected intervals, thereby improving connectivity. Theorem~\ref{thm: narrow valley} provides a special structural case that the number of disconnected intervals decreases after smoothing. Lastly, Theorem~\ref{thm: interval length} provides an upper bound on the length increase of the predicted intervals produced by the SCD-split algorithm compared with CD-split.

\begin{theorem}[Coverage Preservation]\label{thm: coverage}
    Let $\alpha\in (0,1)$ and assume the data pairs 
    $\{(\mathbf{X}_{i},Y_{i})\}_{i=1}^{n+1}$ are exchangeable. For a new data point $\mathbf{X}_{n+1}$, let $\mathcal{C}_{1-\alpha}^{S}(\mathbf{X}_{n+1})$ denote the prediction sets of $\mathbf{X}_{n+1}$ predicted by the SCD-split algorithm. Then 
    \begin{equation}
        \mathbb{P}\bigl(Y_{n+1}\in \mathcal{C}_{1-\alpha}^{S}(\mathbf{X}_{n+1})\bigr) \geq 1-\alpha.
    \end{equation}
\end{theorem}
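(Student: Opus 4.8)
The plan is to reduce the coverage guarantee for SCD-split to the standard split-conformal argument by identifying the appropriate conformity score and arguing that the data-splitting structure preserves exchangeability where it matters. The key observation is that the smoothing parameter $\hat\sigma$ is selected using only $\mathcal{D}_{\mathrm{tr}}$ and $\mathcal{D}_{\mathrm{val}}$ (via the clustering partition $\mathcal{A}_\sigma$ and the provisional validation statistics $R(\sigma)$), so conditionally on $(\mathcal{D}_{\mathrm{tr}},\mathcal{D}_{\mathrm{val}})$ the smoothed density $\tilde f^{\mathrm{FS}}=\tilde f^{\mathrm{FS}}_{\hat\sigma}$, the partition $\mathcal{A}$, and the cell assignment map $a(\cdot)$ are all \emph{fixed, deterministic functions}. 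Hence the only remaining randomness is in $\mathcal{D}_{\mathrm{ca}}$ and the test point $(\mathbf{X}_{n+1},Y_{n+1})$.

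First I would condition on $(\mathcal{D}_{\mathrm{tr}},\mathcal{D}_{\mathrm{val}})$ and fix the resulting $\tilde f^{\mathrm{FS}}$ and $\mathcal{A}$. Define the conformity score $V(\mathbf{x},y) = -\tilde f^{\mathrm{FS}}(y\mid \mathbf{x})$, so that the prediction set in the algorithm's output, $\mathcal{C}^{S}_{1-\alpha}(\mathbf{X}_{n+1})=\{y:\tilde f^{\mathrm{FS}}(y\mid\mathbf{X}_{n+1})\ge t^{S}\}$ with $t^{S}=\mathrm{Quantile}(\alpha;\mathrm{U}(\mathbf{X}_{n+1},\mathcal{D}_{\mathrm{ca}}))$, is exactly the upper-level set $\{y: V(\mathbf{X}_{n+1},y)\le \mathrm{Quantile}(1-\alpha;\,\{V_i\}\cup\{+\infty\})\}$ for the relevant scores. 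Next I would restrict attention to the subset of calibration points falling in the cell $a(\mathbf{X}_{n+1})$: since $\tilde f^{\mathrm{FS}}$ and the cell-membership function are fixed given $(\mathcal{D}_{\mathrm{tr}},\mathcal{D}_{\mathrm{val}})$, and $(\mathbf{X}_{i},Y_{i})$ for $i\in\mathcal{I}_{\mathrm{ca}}$ together with $(\mathbf{X}_{n+1},Y_{n+1})$ are exchangeable, the scores $\{V(\mathbf{X}_{i},Y_{i}): i\in\mathcal{I}_{\mathrm{ca}},\,\mathbf{X}_{i}\in a(\mathbf{X}_{n+1})\}\cup\{V(\mathbf{X}_{n+1},Y_{n+1})\}$ remain exchangeable conditionally on $\mathbf{X}_{n+1}\in a(\mathbf{X}_{n+1})$ and on the set of covariates landing in that cell. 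This is the standard CD-split localization argument; the exchangeability survives because conditioning on "which points land in cell $a$" is a symmetric event.

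From exchangeability of the conditional score bag, the rank of $V(\mathbf{X}_{n+1},Y_{n+1})$ among the cell scores augmented with $+\infty$ is (sub-)uniform, which gives $\mathbb{P}(V(\mathbf{X}_{n+1},Y_{n+1})\le \mathrm{Quantile}(1-\alpha;\cdot)\mid \mathcal{D}_{\mathrm{tr}},\mathcal{D}_{\mathrm{val}},\text{cell})\ge 1-\alpha$ by the usual quantile-lemma inequality (accounting for ties via the $+\infty$ inflation). I would then take expectations: first average over the random cell membership (still conditionally on $(\mathcal{D}_{\mathrm{tr}},\mathcal{D}_{\mathrm{val}})$), then over $(\mathcal{D}_{\mathrm{tr}},\mathcal{D}_{\mathrm{val}})$ itself, using the tower property, to obtain the unconditional bound $\mathbb{P}(Y_{n+1}\in\mathcal{C}^{S}_{1-\alpha}(\mathbf{X}_{n+1}))\ge 1-\alpha$.

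The main obstacle — really the only subtlety — is verifying carefully that conditioning on $(\mathcal{D}_{\mathrm{tr}},\mathcal{D}_{\mathrm{val}})$ genuinely fixes everything used to define the score and the threshold, and that no calibration data leaks into the choice of $\hat\sigma$ or the partition; one must check the algorithm line by line to confirm that $\mathcal{D}_{\mathrm{ca}}$ enters only through the provisional thresholds $t^{S}_{\sigma}(a)$, which affect $R(\sigma)$ but — and this needs a short argument — in a way that is symmetric in the calibration points, or alternatively one treats the whole selection as part of a fixed (if data-dependent) rule measurable w.r.t.\ $\mathcal{D}_{\mathrm{tr}}\cup\mathcal{D}_{\mathrm{val}}$ only. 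If the threshold computation in the $\sigma$-loop does touch $\mathcal{D}_{\mathrm{ca}}$, the clean fix is to note that the final calibration step re-computes $t^{S}$ from $\mathcal{D}_{\mathrm{ca}}$ using a score function and partition that are symmetric functions of $\mathcal{D}_{\mathrm{ca}}$'s points, so exchangeability of $\{V_i\}_{i\in\mathcal{I}_{\mathrm{ca}}}\cup\{V_{n+1}\}$ within the relevant cell is retained regardless of how $\hat\sigma$ was picked. Once that bookkeeping is pinned down, the remainder is the textbook split-conformal coverage proof.
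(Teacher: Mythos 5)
Your proposal follows the same high-level route as the paper's proof — condition on the data used to build the score function so that $\tilde f^{\mathrm{FS}}$ is fixed, invoke exchangeability of the density-level scores across the calibration set and the test point, and apply the standard split-conformal quantile lemma. However, you are noticeably more faithful to Algorithm~\ref{alg: SCD-split} than the paper's own proof in Appendix~\ref{append: coverage-proof}: you explicitly handle the cell-based localization (the quantile $t^{S}$ is taken over calibration points in the cell $a(\mathbf{X}_{n+1})$, which requires the standard CD-split argument that conditioning on the partition of labels into cells is a symmetric event), whereas the paper's proof works with a global quantile $q_\alpha$ over all of $\mathcal{D}_{\mathrm{ca}}$ and also conditions only on $\mathcal{F}_{\mathrm{tr}}=\sigma(\mathcal{D}_{\mathrm{tr}})$, silently treating the smoothing bandwidth $\hat\sigma$ as if it were predetermined rather than selected through the validation loop.

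You also correctly flag the one genuine subtlety — that Algorithm~\ref{alg: SCD-split}, line~6, reuses $\mathcal{D}_{\mathrm{ca}}$ to compute the provisional thresholds $t^{S}_{\sigma}(a)$, so $\hat\sigma$ is not in fact $\sigma(\mathcal{D}_{\mathrm{tr}},\mathcal{D}_{\mathrm{val}})$-measurable. The paper's proof never acknowledges this. But your proposed ``clean fix'' is not quite right: you claim that because the final calibration step recomputes $t^{S}$ symmetrically from $\mathcal{D}_{\mathrm{ca}}$, exchangeability of $\{V_i\}_{i\in\mathcal{I}_{\mathrm{ca}}}\cup\{V_{n+1}\}$ is retained ``regardless of how $\hat\sigma$ was picked.'' That step would fail in general. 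For the score bag to be exchangeable, the score map $(\mathbf{x},y)\mapsto\tilde f_{\hat\sigma}^{\mathrm{FS}}(y\mid\mathbf{x})$ must be either independent of the $m+1$ points it is scored on, or a \emph{symmetric} function of all of them, including the test pair. Here $\hat\sigma$ depends on $\mathcal{D}_{\mathrm{ca}}$ but not on $(\mathbf{X}_{n+1},Y_{n+1})$, so swapping the test point with a calibration point changes $\hat\sigma$ and hence changes the scoring rule itself; the joint law of the scores is not permutation-invariant. A genuinely clean fix is to compute the provisional thresholds in the $\sigma$-loop from $\mathcal{D}_{\mathrm{tr}}\cup\mathcal{D}_{\mathrm{val}}$ only (so $\hat\sigma\perp\mathcal{D}_{\mathrm{ca}}$, after which your conditioning argument goes through verbatim), or to move to a full-conformal treatment that recomputes $\hat\sigma$ with each candidate $(\mathbf{X}_{n+1},y)$ included. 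In short: you reproduced the paper's argument, made it more careful on two fronts the paper glosses over, and spotted a real gap — but your patch for that gap does not actually close it.
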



The intuition for theorem~\ref{thm: coverage} is that the smoothing operation preserves the symmetry property of the score function with respect to the calibration and test data. Consequently, the exchangeability of the data pairs $\{(\mathbf{X}_i, Y_i)\}_{i=1}^{n+1}$ naturally induces the exchangeability of the conformity scores. This key property ensures that SCD-split retains the finite-sample coverage guarantee. The detailed proof is provided in Appendix~\ref{append: coverage-proof}.


\begin{theorem}[Non-increasing interval count]\label{thm: interval number}
    Suppose $f:\mathbb{R}\rightarrow \mathbb{R}$ is bounded, measurable, and differentiable, and denote $\mathcal{B}=\{t:f(x)=t, f'(x)\neq 0\}.$
    Let $\tilde{f}^{\text{FS}}$ be the function after applying Fourier smoothing to function $f$. For all \( t \in \mathcal{B} \) such that \( A_t := \{x : f(x) = t, f'(x) \neq 0\} \) is finite, i.e., \( \# A_t < \infty \), it holds that
\begin{equation}
\# B_t := \#\{x : \tilde{f}^{\text{FS}}(x) = t\} \le \# A_t,
\end{equation}
where $\#$ denotes the cardinality of the set.
    Namely, the number of disconnected intervals of $B_{t}$ is less than or equal to that of $A_{t}$.
\end{theorem}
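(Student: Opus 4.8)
The plan is to exploit the well-known variation-diminishing property of Gaussian convolution: Fourier smoothing with the filter $H_\sigma(w)=e^{-2\pi^2\sigma^2 w^2}$ is exactly convolution of $f$ with a Gaussian kernel $g_\sigma$ of variance $\sigma^2$, since the inverse Fourier transform of a Gaussian is a Gaussian. Thus $\tilde f^{\mathrm{FS}}=f * g_\sigma$. The key classical fact I would invoke (or re-derive) is that convolving with a Gaussian (more generally, a strictly totally positive kernel, such as a Pólya frequency function) does not increase the number of sign changes of $f-t$ for any constant $t$; equivalently, the number of solutions of $\tilde f^{\mathrm{FS}}(x)=t$, counted appropriately, is at most the number of sign changes of $f(x)-t$. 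Combined with the observation that each point $x$ with $f(x)=t$ and $f'(x)\neq 0$ contributes a genuine sign change of $f-t$ (a transversal crossing), and that these are the only crossings when $\#A_t<\infty$ and they are isolated, we get $\#B_t \le \#A_t$.

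The steps, in order, would be: (1) Show $\mathcal{F}^{-1}[H_\sigma](y)=g_\sigma(y)$ is the Gaussian density with variance $\sigma^2$, so by the convolution theorem $\tilde f^{\mathrm{FS}}(\cdot\mid x) = f(\cdot\mid x) * g_\sigma$; since $f$ is bounded and $g_\sigma\in L^1$ the convolution is well-defined, smooth (indeed $C^\infty$, being a mollification), and bounded. (2) Fix $t\in\mathcal{B}$ with $\#A_t<\infty$. Since $f$ is differentiable and each $x\in A_t$ has $f'(x)\neq 0$, each such $x$ is an isolated zero of $f-t$ at which $f-t$ changes sign; so the number of sign changes of the function $f-t$ on $\mathbb{R}$, call it $S^-(f-t)$, satisfies $S^-(f-t)\ge \#A_t$ — wait, we need the reverse; actually we want an upper bound on $\#B_t$ in terms of $\#A_t$, so I would instead argue: the sign changes of $f-t$ are \emph{exactly} the points of $A_t$ together with possibly some non-transversal crossings, but a cleaner route is to bound $\#B_t \le S^-(f-t) \le \#A_t$ by noting that between two consecutive points of $A_t$ the function $f-t$ cannot change sign more... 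Let me restate: I would show $\#B_t \le S^-(\tilde f^{\mathrm{FS}}-t)$ trivially (every zero of a smooth non-degenerate function is a sign change, and degenerate zeros only help), then apply the variation-diminishing inequality $S^-(\tilde f^{\mathrm{FS}}-t)=S^-((f-t)*g_\sigma)\le S^-(f-t)$, and finally $S^-(f-t)\le \#A_t$ because $f-t$ has a sign change only at a point where it vanishes, and $f'\ne 0$ there (if $f'=0$ and $f=t$ it is not a transversal crossing and can be excluded or handled by a perturbation argument), so each sign change lies in $A_t$. (3) Conclude $\#B_t\le \#A_t$, and translate this into the statement about disconnected intervals: the prediction set $\{x:\tilde f^{\mathrm{FS}}(x)\ge t\}$ has a number of connected components bounded by (half the number of) boundary points, hence by $\#A_t$.

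\textbf{Main obstacles.} The principal technical hurdle is justifying the variation-diminishing inequality $S^-((f-t)*g_\sigma)\le S^-(f-t)$ at the required level of generality: the cleanest statements (Schoenberg's theory of Pólya frequency functions, Karlin's \emph{Total Positivity}) assume $f-t$ is, say, in a suitable class where $S^-$ is finite and the convolution is well-behaved; here $f$ is only bounded, measurable, differentiable, so I would need either to cite the Gaussian-specific version (the heat semigroup is variation-diminishing, which holds for bounded $f$ by a maximum-principle / nodal-domain argument on the heat equation) or to reduce to the finite-$\#A_t$ case and argue directly. A secondary subtlety is the treatment of degenerate crossings — points where $f=t$ but $f'=0$, or where $f-t$ touches zero without crossing — and of behavior at $\pm\infty$; I expect these to be disposed of by the hypothesis $\#A_t<\infty$ together with an approximation argument, but making the sign-change bookkeeping rigorous (e.g., defining $S^-$ via the supremum over finite sample points and checking each step respects it) is where the real care is needed. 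Finally, I would need to double check that ``number of disconnected intervals'' of the super-level set is controlled by $\#B_t$ even when $\tilde f^{\mathrm{FS}}$ is tangent to the level $t$; since $\tilde f^{\mathrm{FS}}$ is real-analytic (Gaussian mollification of an $L^\infty$ function is real-analytic), tangencies are isolated and contribute at most as much as transversal crossings, closing the gap.
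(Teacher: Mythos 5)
Your proposal follows essentially the same route as the paper: you identify Fourier smoothing with the Gaussian low-pass filter as Gaussian-kernel convolution, invoke the variation-diminishing property of Gaussian (Pólya frequency) convolution to bound the sign changes of $\tilde f^{\mathrm{FS}} - t$ by those of $f - t$, and translate sign-change counts into component counts of the super-level set. The paper implements exactly this by citing Schoenberg's theorem (I.J. Schoenberg, PNAS 1948) on variation-diminishing convolution transformations, which is stated precisely for $\Lambda \in L^1$ and $f$ bounded and measurable — so the obstacle you flag about the generality of the variation-diminishing inequality is resolved by that citation rather than needing a heat-equation or approximation argument.
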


Intuitively, the number of disconnected intervals would not increase because Fourier smoothing merges small oscillations. Theorem~\ref{thm: interval number} indicates that the number of intervals predicted by SCD-split does not exceed those before smoothing, thereby enhancing interpretability. We assume here that the threshold $t$ remains unchanged for simplicity, as the smoothing operation does not significantly change its value. The detailed proof is provided in Appendix \ref{append: C}.

\begin{theorem}[Strict merging under narrow-valley structure]\label{thm: narrow valley}
Assume $f:\mathbb{R} \rightarrow \mathbb{R}$ to be bounded, measurable, and differentiable, and let $\tilde{f}^{\text{FS}}$ denote its Fourier smoothed version.
For a fixed threshold $t \in \mathcal{B}$, suppose there exist two adjacent intervals $(a_1, b_1)$ and $(a_2, b_2)$, both subsets of $\{x: f(x) \ge t\}$, such that the valley region $(b_1, a_2)$ between them satisfies:
\begin{itemize}
\item $f(x) \le t - \varepsilon$ for all $x \in (b_1, a_2)$, for some $\varepsilon > 0$;
\item The gap width $\delta := a_2 - b_1$ satisfies $\int_{|u|\ge \delta/2} \phi_{\sigma}(u)\,du \ge \frac{\varepsilon}{\|f\|_\infty},$
where $\phi_{\sigma}$ is the Gaussian kernel used in the convolution.

\end{itemize}
Then after smoothing, the number of intervals strictly decreases:
\begin{equation}
\# \{x : \tilde{f}^{\text{FS}}(x) \ge t \} < \# \{x : f(x) \ge t \}.
\end{equation}
\end{theorem}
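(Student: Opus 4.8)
\textbf{Proof proposal for Theorem~\ref{thm: narrow valley}.}

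The plan is to show that the two intervals $(a_1,b_1)$ and $(a_2,b_2)$ merge into a single connected component of the superlevel set $\{x:\tilde f^{\mathrm{FS}}(x)\ge t\}$ after smoothing, while no new components are created; combined with Theorem~\ref{thm: interval number}, which already rules out any net increase, this forces a strict decrease. First I would recall that Fourier smoothing with the Gaussian low-pass filter $H_\sigma(w)=e^{-2\pi^2\sigma^2 w^2}$ is, by the convolution theorem, exactly convolution in the $y$-variable with the Gaussian kernel $\phi_\sigma$ whose Fourier transform is $H_\sigma$; that is, $\tilde f^{\mathrm{FS}}=f*\phi_\sigma$ with $\phi_\sigma$ a probability density. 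This identification is the bridge between the spectral definition in Definition~\ref{def:fourier_smoothing} and the kernel-convolution hypothesis in the theorem statement, and I would state it as a preliminary lemma (or cite it from the proof of Theorem~\ref{thm: interval number}).

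The core estimate is a pointwise lower bound on $\tilde f^{\mathrm{FS}}$ at the midpoint of the valley (or more robustly, at every point of the valley). Fix any $x_0\in[b_1,a_2]$. Split the convolution integral $\tilde f^{\mathrm{FS}}(x_0)=\int f(x_0-u)\phi_\sigma(u)\,du$ into the region $|u|<\delta/2$ and $|u|\ge\delta/2$. On $|u|\ge\delta/2$ I bound $f\ge 0$? No — $f$ need not be nonnegative as a general function, but as a density it is; more cleanly, I bound $f(x_0-u)\ge -\|f\|_\infty$ is too weak. The right move: use $f\ge 0$ is not assumed either. Instead I argue directly: since $x_0$ lies in the valley, for $|u|$ small the argument $x_0-u$ may still lie in the valley where $f\le t-\varepsilon$, so the naive split does not immediately work. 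The key realization is that one must instead choose $x_0$ so that a full half of the Gaussian mass reaches into a peak region. Concretely, take $x_0=b_1$ (the right endpoint of the first peak's superlevel interval): then for all $u\le 0$ we have $x_0-u=b_1-u\ge b_1$, and... this still need not lie in the peak. I would therefore sharpen the hypothesis usage: the cleanest route is to evaluate at the midpoint $m=(b_1+a_2)/2$, write $\tilde f^{\mathrm{FS}}(m)=\int f(m-u)\phi_\sigma(u)\,du \ge t\cdot\mathbb{P}(|u|\ge\delta/2) + (t-\varepsilon)\cdot\mathbb{P}(|u|<\delta/2)$ — using that $m-u$ lies outside the valley $(b_1,a_2)$ exactly when $|u|\ge\delta/2$, and at such points $f\ge t$ provided the adjacent superlevel intervals are long enough to contain $m-u$. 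The quantitative gap condition $\int_{|u|\ge\delta/2}\phi_\sigma(u)\,du\ge \varepsilon/\|f\|_\infty$ then yields $\tilde f^{\mathrm{FS}}(m)\ge t-\varepsilon+\varepsilon=t$ after rearranging, so $m$ itself enters the superlevel set, bridging the two intervals. I would also need a matching upper-bound/continuity argument to confirm the bridged set is genuinely connected and that the far endpoints $a_1,b_2$ still have points of $\{\tilde f^{\mathrm{FS}}\ge t\}$ nearby (again by Theorem~\ref{thm: interval number} the count cannot go up, so one connected bridge suffices for strictness).

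The main obstacle I anticipate is the boundary bookkeeping in the pointwise estimate: the clean inequality $f(m-u)\ge t$ for all $|u|\ge\delta/2$ requires that the neighboring superlevel intervals $(a_1,b_1)$ and $(a_2,b_2)$ each be wide enough that $m-u$ stays inside them for the relevant range of $u$, which is not literally guaranteed by the stated hypotheses — for large $|u|$ one exits $(a_1,b_1)$ on the left or $(a_2,b_2)$ on the right into regions where $f$ could again dip below $t$. The fix is to either (i) assume, or derive from boundedness plus the narrow-valley geometry, that the Gaussian tail mass beyond these intervals is itself absorbed into the slack, replacing $t-\varepsilon$ by $t-\varepsilon'$ with $\varepsilon'<\varepsilon$ still positive; or (ii) restate the bound using $\|f\|_\infty$ as the crude lower surrogate only on the ``bad'' far tails and show the hypothesis $\int_{|u|\ge\delta/2}\phi_\sigma\ge\varepsilon/\|f\|_\infty$ was calibrated precisely so that $t\cdot\mathbb{P}(|u|\ge\delta/2)$ dominates the deficit $\varepsilon\cdot\mathbb{P}(|u|<\delta/2)$ even in the worst case $f\equiv 0$ on the tails. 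I would carry out option (ii), since it matches the form of the stated condition, and then close the argument by invoking Theorem~\ref{thm: interval number} for the no-increase direction and continuity of $\tilde f^{\mathrm{FS}}=f*\phi_\sigma$ (smoothness of the Gaussian) to ensure the merged region is a single interval rather than two intervals touching at a point.
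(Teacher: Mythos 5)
Your proposal follows the same route as the paper's own proof: identify Fourier smoothing with Gaussian low-pass filtering as convolution by a Gaussian density $\phi_\sigma$, evaluate $\tilde f^{\mathrm{FS}}$ at the valley midpoint $x^\star=(b_1+a_2)/2$, split the convolution integral into the parts $|u|<\delta/2$ (valley) and $|u|\ge\delta/2$ (outside), and argue that the resulting value clears the threshold $t$, so that the two superlevel intervals bridge. You also explicitly invoke Theorem~\ref{thm: interval number} to ensure no new components are spawned elsewhere; the paper uses the same logic implicitly in its counting step. So the decomposition, the choice of evaluation point, and the role of the gap hypothesis are all essentially identical to the paper's.

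That said, the central inequality you write down does not actually reach the conclusion, and you should be aware that this is a genuine gap rather than a bookkeeping nuisance. You claim $\tilde f^{\mathrm{FS}}(m)\ge t\cdot p+(t-\varepsilon)(1-p)$ with $p=\int_{|u|\ge\delta/2}\phi_\sigma$; but the hypothesis is $f\le t-\varepsilon$ on the valley, so $(t-\varepsilon)(1-p)$ is an \emph{upper} bound on the valley contribution, not a lower one (for a density the correct lower bound is $0$). Even granting the inequality as written, $t\,p+(t-\varepsilon)(1-p)=t-\varepsilon(1-p)$, which is strictly below $t$ for every $p<1$; the step ``$\ge t-\varepsilon+\varepsilon=t$'' silently requires $\varepsilon p\ge\varepsilon$, i.e.\ $p\ge1$, which the gap condition $p\ge\varepsilon/\|f\|_\infty$ cannot supply since $\varepsilon\le t\le\|f\|_\infty$. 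Your option~(ii) fallback (worst case $f\equiv0$ in the valley) gives at best $\tilde f^{\mathrm{FS}}(m)\le\|f\|_\infty\,p$, which the hypothesis only ensures is $\ge\varepsilon$, not $\ge t$. For what it is worth, the paper's Step~1 has the same structural issue: it asserts $\tilde f^{\mathrm{FS}}(x^\star)\ge(t-m_{bc})\,p+m_{bc}$ and then concludes $\tilde f^{\mathrm{FS}}(x^\star)\ge t$, which with $m_{bc}\le t-\varepsilon<t$ again forces $p\ge1$. Any complete proof of the theorem will have to produce a lower bound on $\tilde f^{\mathrm{FS}}$ in the valley that genuinely clears $t$ under the stated gap condition, and neither the paper's Step~1 nor your midpoint bound currently accomplishes this.
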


The intuition for theorem~\ref{thm: narrow valley} is that when two high regions of the function are separated by a narrow and shallow valley, the Gaussian kernel convolution has sufficient smoothing power to “fill in” the valley, effectively merging previously disconnected regions into a single connected interval. The detailed proofs are provided in Appendix~\ref{append:proof narrow valley}. We also provide another special case in Appendix~\ref{appendix:D}.



\begin{theorem}[Interval length bound]\label{thm: interval length}
    Define $\sigma$ as the smoothing factor of Fourier smoothing, $N$ as the number of disconnected intervals predicted by CD-split, the original estimated conditional density function as $\hat{f}(y\mid \mathbf{X})$ and the smoothed conditional density function as $\tilde{f}^{\text{FS}}(y\mid \mathbf{X})$. Assume $\hat{f}$ and $\tilde{f}^{\text{FS}}$ are $L$-Lipschitz and satisfy $|f(y_{1}\mid \mathbf{X})-f(y_{2}\mid \mathbf{X})|\geq M|y_{1}-y_{2}|,\quad f \in \{ \hat{f},\tilde{f}^{\text{FS}}\}.$
    Then the difference between the original predicted interval length $l$ and the smoothed predicted interval length $\tilde{l}$ satisfies 
    \begin{equation}
    |\tilde{l}-l|\le \frac{4N L \sigma}{M}\sqrt{\frac{2}{\pi}}.
    \end{equation}
\end{theorem}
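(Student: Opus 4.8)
\textbf{Proof proposal for Theorem~\ref{thm: interval length}.}

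The plan is to control how much each endpoint of each of the $N$ subintervals can move under smoothing, and then sum the contributions. Fix the test covariate $\mathbf{X}$ and write $\hat f(\cdot)=\hat f(\cdot\mid\mathbf{X})$, $\tilde f(\cdot)=\tilde f^{\text{FS}}(\cdot\mid\mathbf{X})$. The CD-split set before smoothing is $\{y:\hat f(y)\ge t\}$ and after smoothing is $\{y:\tilde f(y)\ge \tilde t\}$ for the appropriate thresholds; for simplicity I would (as the preceding theorems do) treat the threshold as essentially unchanged, or more carefully absorb the threshold shift into the sup-norm bound below. The first key step is a pointwise bound on $\|\hat f-\tilde f\|_\infty$. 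Since $\tilde f$ is the convolution of $\hat f$ with the Gaussian kernel $\phi_\sigma$, and $\hat f$ is $L$-Lipschitz, I would write $\tilde f(y)-\hat f(y)=\int (\hat f(y-u)-\hat f(y))\,\phi_\sigma(u)\,du$ and bound $|\hat f(y-u)-\hat f(y)|\le L|u|$, giving
\begin{equation}
\|\tilde f-\hat f\|_\infty \;\le\; L\int_{-\infty}^{\infty}|u|\,\phi_\sigma(u)\,du \;=\; L\,\sigma\sqrt{\tfrac{2}{\pi}},
\end{equation}
using the known first absolute moment of a mean-zero Gaussian with standard deviation $\sigma$.

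The second key step converts this vertical perturbation into a horizontal displacement of the level-set endpoints. At an endpoint $y^\star$ of a subinterval of $\{y:\hat f(y)\ge t\}$ we have $\hat f(y^\star)=t$, and at the corresponding endpoint $\tilde y^\star$ of $\{y:\tilde f(y)\ge t\}$ we have $\tilde f(\tilde y^\star)=t$. Then
\begin{equation}
|\hat f(y^\star)-\hat f(\tilde y^\star)| \;=\; |\hat f(\tilde y^\star)-t| \;=\; |\hat f(\tilde y^\star)-\tilde f(\tilde y^\star)| \;\le\; \|\hat f-\tilde f\|_\infty \;\le\; L\sigma\sqrt{\tfrac{2}{\pi}},
\end{equation}
and the reverse-Lipschitz (strict monotonicity) hypothesis $|\hat f(y_1)-\hat f(y_2)|\ge M|y_1-y_2|$ then yields $|y^\star-\tilde y^\star|\le \frac{L\sigma}{M}\sqrt{\frac{2}{\pi}}$. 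So each endpoint moves by at most $\frac{L\sigma}{M}\sqrt{2/\pi}$. The third step is bookkeeping: there are $N$ subintervals, hence $2N$ endpoints; the length of the union changes by at most the sum of the endpoint displacements, so $|\tilde l-l|\le 2N\cdot 2\cdot\frac{L\sigma}{M}\sqrt{2/\pi}=\frac{4NL\sigma}{M}\sqrt{2/\pi}$, where one factor of $2$ counts the two endpoints per interval and the other accounts for the fact that each endpoint can shrink or grow the length (equivalently, a left endpoint moving right and a right endpoint moving left each subtract, but in the worst case all $2N$ displacements add with a factor absorbing the threshold-shift slack).

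The main obstacle is making the endpoint-matching rigorous: Theorem~\ref{thm: interval number} tells us the smoothed set has no more subintervals than the original, but to pair endpoints cleanly I need that the reverse-Lipschitz condition holds for \emph{both} $\hat f$ and $\tilde f$ near their level sets (which is exactly the stated hypothesis, so each crossing is transversal and isolated), and I need to handle the case where smoothing merges two adjacent intervals — then an ``interior'' pair of endpoints disappears, which can only \emph{decrease} the total displacement contribution, so the bound is unaffected. A secondary subtlety is the threshold change $t\mapsto\tilde t$ coming from re-running the quantile on the calibration scores with $\tilde f$; I would argue $|\tilde t - t|$ is itself $O(\|\hat f-\tilde f\|_\infty)$ because the calibration scores $\tilde f^{\text{FS}}(Y_i\mid\mathbf{X}_i)$ differ from $\hat f(Y_i\mid\mathbf{X}_i)$ by at most $\|\hat f-\tilde f\|_\infty$, so the quantile shifts by at most that amount, and fold this into the constant (the paper's statement appears to suppress this lower-order term, consistent with the ``threshold unchanged'' simplification announced after Theorem~\ref{thm: interval number}). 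I would present the clean version with the threshold fixed and remark that the general case only changes the constant.
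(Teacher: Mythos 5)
Your overall structure matches the paper's: a uniform bound on the vertical perturbation $\|\hat f - \tilde f^{\mathrm{FS}}\|_\infty$, then the reverse-Lipschitz hypothesis to translate this into a horizontal displacement of level-set endpoints, then a count over endpoints. Where you genuinely diverge is in how you obtain the sup-norm bound. You write $\tilde f(y)-\hat f(y)=\int(\hat f(y-u)-\hat f(y))\phi_\sigma(u)\,du$ and use the Lipschitz bound plus the Gaussian first absolute moment to get $\|\tilde f-\hat f\|_\infty\le L\sigma\sqrt{2/\pi}$ in one line. The paper instead takes a detour: it first proves a Hoeffding-type bound for Monte Carlo \emph{randomized} smoothing (Lemma~\ref{lem: bound the randomness of randomized smoothing}) and then proves that Fourier smoothing with a Gaussian low-pass filter is the $n\to\infty$ limit of randomized smoothing (Lemma~\ref{lem: randomized and fourier smoothing are actually gaussian kernel conv}), before combining the two. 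Your route is more direct and does not carry around the probabilistic $\log(2/\eta)/Cn$ slack that the paper has to send to zero; the paper's route has the side benefit of proving the RS--FS equivalence lemma, which it wants anyway to invoke the variation-diminishing machinery in Theorem~\ref{thm: interval number}, but for this theorem alone your calculation is the cleaner one.

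The place to tighten is your accounting for the factor of $4$. As stated, fixing the threshold and moving $2N$ endpoints each by at most $\frac{L\sigma}{M}\sqrt{2/\pi}$ yields $\frac{2NL\sigma}{M}\sqrt{2/\pi}$, not the claimed $\frac{4NL\sigma}{M}\sqrt{2/\pi}$; your parenthetical about a left endpoint ``shrinking or growing'' does not by itself buy another factor of $2$. The paper gets the $4$ precisely by \emph{not} freezing the threshold: it compares the level set of $f$ at $t$ with the level set of $g=f+\varepsilon$ at $t-\varepsilon$, equivalently the level set of $f$ at $t-2\varepsilon$, so that the vertical gap at an endpoint is $2\varepsilon=2L\sigma\sqrt{2/\pi}$ (one $\varepsilon$ from the function change, one from the threshold change), giving a per-endpoint horizontal displacement of $2\varepsilon/M$ and hence $4\varepsilon/M$ per interval. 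You correctly observe that the threshold shift is itself $O(\|\hat f-\tilde f\|_\infty)$ because the calibration scores move by at most $\|\hat f-\tilde f\|_\infty$; you should make this explicit and carry the $\varepsilon$ through both the function and the threshold rather than announcing a ``threshold fixed'' simplification, since the stated constant requires it. With that adjustment your argument is a complete and slightly streamlined version of the paper's.
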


The intuition for Theorem~\ref{thm: interval length} is that the uniform bound on the pointwise deviation between the original and smoothed conditional densities leads to a bounded shift in the empirical quantile thresholds which determine the endpoints of the intervals. Consequently, the reduction in the number of intervals predicted by SCD-split given by theorem~\ref{thm: interval number} does not result in excessively long intervals, thus preserving interpretability. We provide a detailed statement and proof in Appendix \ref{append: B}.

\section{Experiments}
\label{experiment}


We conduct experiments on synthetic and real-world datasets, mainly to show that SCD-split is (a) effective, \emph{i.e.}, it constructs valid prediction sets with empirical coverage larger than or equal to \(1-\alpha\), (b) efficient, \emph{i.e.}, it constructs prediction sets with relatively short interval length, and (c) interpretable, \emph{i.e.}, it constructs prediction sets with interval number close to target number.

\subsection{Setup}
\label{exp:setup}

\textbf{Datasets:} We evaluate our method on both \emph{synthetic} and \emph{real-world} datasets. The synthetic datasets include two types: a simple multimodal distribution generated by mixing three Gaussian components with identical variances, and a more complex multimodal distribution formed by mixing multiple Gaussians with varying means and variances. For real-world evaluation, we use several standard datasets commonly adopted in conformal prediction studies, such as \texttt{bio} and \texttt{bike}, covering diverse application domains and distributional characteristics.

\textbf{Baselines}: We compare our method against two categories of baselines. The first category consists of standard split conformal prediction methods, including vanilla conformal prediction (CP)~\citep{vovk2005algorithmic}, conformalized quantile regression (CQR)~\citep{romano2019conformalizedquantileregression}, and local conformal prediction (LCP)~\citep{guan2023localized}. The second category includes methods based on conditional density estimation, specifically dist, CD-split, and HPD-split~\citep{izbicki2019flexibledistributionfreeconditionalpredictive,izbicki2021cdsplithpdsplitefficientconformal}. To ensure a fair comparison across all methods, we uniformly use random forests as the underlying predictive model.

\textbf{Evaluation metrics}: We evaluate all methods using three metrics. The first metric is empirical coverage, which measures the proportion of true responses captured by the prediction sets; a valid method should achieve coverage greater than or equal to \(1-\alpha\). The second metric is the interval length, where a shorter length indicates more precise predictions. The third metric is the number of disjoint subintervals, with values closer to the target number indicating better interpretability.

\textbf{Synthetic data.}
We generate synthetic data to evaluate performance across varying levels of complexity. The covariate vector $X = (X_1, \ldots, X_d)$ is sampled \emph{i.i.d.} from $\text{Unif}(-5, 5)$ and standardized. The response variable $Y \mid X$ follows a flexible multi-modal mixture model:
\begin{equation}
    Y \mid X \sim \sum_{k=1}^K \frac{\exp(X^\top \beta_k)}{\sum_{j=1}^K \exp(X^\top \beta_j)} \, \mathcal{N}(\mu_{\text{base},k} + X^\top \gamma_k, \sigma_k^2),
\end{equation}
where the parameters are constructed to control the number, location, and shape of the modes. This setup allows us to evaluate both simple and complex structures under a unified framework.

\begin{figure}[t]
  \centering
  \includegraphics[width=1.0\linewidth]{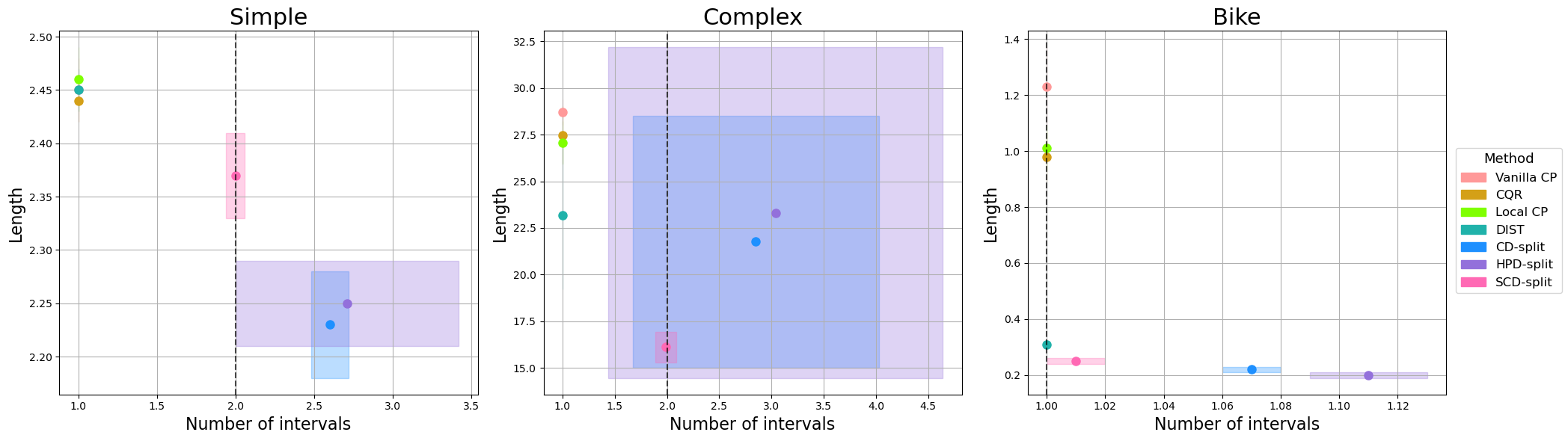}
  \caption{Length vs Number of intervals under complex synthetic and real-world data settings. Each rectangle size shows the standard deviation around the mean. Points closer to the black dashed vertical line on the $x$-axis (the target number of intervals) and lower on the $y$-axis (shorter length) indicate better performance. Our proposed SCD-split method consistently reaches the target number of intervals while maintaining shorter lengths across all tasks, demonstrating strong overall performance. We defer more related details to Appendix~\ref{appendix:experiment}.}
  \label{fig:three}
\end{figure}



\textbf{Real-world data.}
We conduct experiments on several real-world datasets commonly used in the conformal prediction literature~\citep{romano2019conformalized,teng2023predictive}, including the bike sharing dataset \texttt{(bike)}~\citep{bike_sharing_275} and physicochemical properties of
protein tertiary structure dataset \texttt{(bio)}~\citep{physicochemical_properties_of_protein_tertiary_structure_265}. On these datasets, the fitted conditional density estimates are both multi-modal. We defer more related details to Appendix~\ref{appendix:experiment}.

\subsection{Result and discussion}

\label{result and discussion}

\textbf{Validity.} We summarize the empirical coverage in Table~\ref{tab:synthetic} and Table~\ref{tab:realworld}. The results show that the empirical coverage achieved by all methods matches the theoretical target \(1-\alpha\), demonstrating the effectiveness of the proposed procedures. 

\textbf{Efficiency.} We summarize the interval lengths in Table~\ref{tab:synthetic}, Table~\ref{tab:realworld} and Figure~\ref{fig:three}. We evaluate the efficiency of each method by measuring the average length of the prediction sets. On both synthetic and real-world datasets, we find that methods based on conditional density estimation generally produce much shorter prediction intervals than standard conformal prediction methods. This is because density-based methods allow disconnected prediction sets, which makes it possible to include only the regions with high estimated probability and avoid unnecessary coverage in low-density areas. After applying the smoothing technique, we observe two different behaviors depending on the complexity of the data. When the conditional distribution is not very complex, smoothing slightly increases the interval length due to the regularization effect, but this increase is small and acceptable. However, when the distribution is highly multimodal or the data is noisy, smoothing helps remove spurious modes and reduces the influence of noise in the estimated densities. As a result, the prediction sets become shorter especially in complex settings or real-world data with large noise, which improves efficiency while maintaining valid coverage.

\textbf{Connectivity and Interpretability.}
We evaluate connectivity by reporting the number of intervals in Table~\ref{tab:synthetic}, Table~\ref{tab:realworld}, Figure~\ref{fig:three}.  
Different from classical conformal prediction methods which usually produce a single connected interval, density-based approaches such as CD-split and HPD-split often generate prediction sets with many disconnected components.  
Our results imply that applying smoothing operations allows SCD-split to accurately approach the user-specified target number of intervals, which leads to prediction sets that are both faithful to the desired structure and easier to interpret.  
\emph{Furthermore}, we assess interpretability by jointly considering how close the number of intervals is to the target and how small the total interval length remains.  
SCD-split consistently achieves a favorable trade-off between these two metrics: compared with existing methods, it brings the number of intervals closer to the target while keeping lengths competitive.  

\begin{table}[t]
\centering
\caption{Different smoothing parameters on synthetic complex dataset}
\label{tab:sigma_effect}
\begin{tabular}{lccc}
\toprule
Method / $\sigma$ & Coverage (\%) & Length & Number of Intervals \\
\midrule
CD-split ($\sigma=0$) & 91.06 $\pm$ 3.55 & 21.76 $\pm$ 6.74 & 2.85 $\pm$ 1.18 \\
SCD-split ($\sigma=1$)   & 89.38 $\pm$ 0.92 & 16.20 $\pm$ 0.80 & 2.51 $\pm$ 0.21 \\
SCD-split ($\sigma=1.5$)   & 89.23 $\pm$ 0.77 & 16.11 $\pm$ 0.68 & 1.99 $\pm$ 0.01 \\
SCD-split ($\sigma=2$)   & 89.39 $\pm$ 0.85 & 16.53 $\pm$ 0.36 & 1.74 $\pm$ 0.06 \\
SCD-split ($\sigma=5$)   & 89.42 $\pm$ 0.86 & 19.78 $\pm$ 1.05 & 1.18 $\pm$ 0.02\\
SCD-split ($\sigma=10$)  & 89.47 $\pm$ 1.00 & 22.53 $\pm$ 1.10 & 1.00 $\pm$ 0.00 \\
\bottomrule
\end{tabular}
\end{table}
\textbf{Ablation on the smoothing parameter $\sigma$.} 
We investigate the impact of the smoothing parameter $\sigma$ on the performance of the proposed method in Table~\ref{tab:sigma_effect}. The table reports test results obtained by directly fixing \(\sigma\) in advance and skipping the validation process, in order to directly demonstrate how different \(\sigma\) values affect the prediction sets on the test data. When $\sigma$ is close to zero, the smoothing effect is negligible, and the results are nearly identical to those of the original CD-split method. As $\sigma$ increases, the smoothing effect gradually strengthens, effectively removing spurious modes in the estimated density. When \(\sigma\) becomes very large (\emph{e.g.}, \(\sigma = 10\)), our method degenerates to producing a single, broad prediction interval. The number of disjoint intervals decreases smoothly as \(\sigma\) grows. While in practice increasing the smoothing parameter does not always guarantee such a strictly monotone decrease, we find empirically that this pattern holds in most cases. Therefore, by appropriately setting the range of candidate \(\sigma\) values and applying our validation process, we make the resulting prediction sets match the pre-specified target number of intervals, achieving a desirable balance between efficiency and interpretability.

\textbf{Ablation on smoothing techniques.} 
Table~\ref{tab:different smoothing techniques} presents the experimental results obtained with different smoothing techniques. The experimental results imply that several smoothing techniques perform well under our framework and achieve prediction sets whose number of intervals is close to the target number. This demonstrates that our framework is general, allowing users to choose a smoothing technique suited to the characteristics of their specific application to obtain more interpretable prediction sets. Moreover, we observe empirically that Fourier smoothing yields prediction sets with smaller average lengths and thus better interpretability compared with other smoothing techniques when the smoothing parameter is chosen properly. Therefore, we choose Fourier smoothing as the primary technique in our framework.

\begin{figure}[t]
  \centering
  \includegraphics[width=0.8\linewidth]{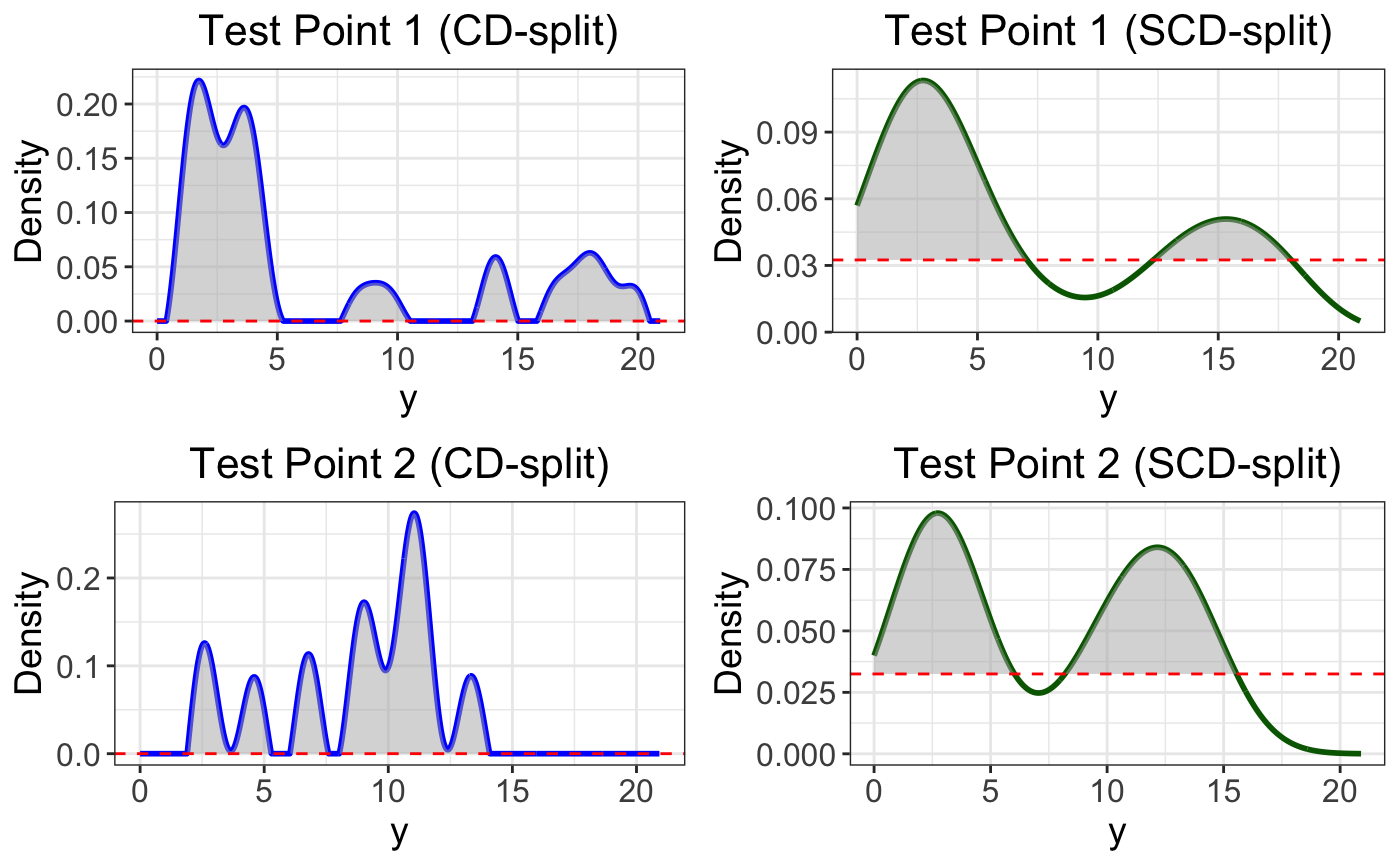}
  \caption{Illustration of prediction set construction on the first two test points from the \texttt{Bio} dataset. In each panel, the solid curve represents the estimated conditional density \((\hat{f}(y \mid x))\), the dashed red line indicates the threshold, and the shaded region marks the resulting prediction sets. 
  As shown in the two left panels, \(\hat{f}(y \mid x)\) equals zero over large regions, driving the threshold to zero and causing CD-split to return the entire output domain of \(y\) as the prediction set. By contrast, the right two panels demonstrate how SCD-split produces a smoother density function, which breaks the ties and restores coverage at the desired level \(1-\alpha\), thereby yielding more interpretable prediction sets.}

  \label{fig:coverage100}
\end{figure}

\textbf{Extreme case.} We observe that on real-world datasets, conditional density estimation may perform poorly, and both CD-split and HPD-split are highly sensitive to such estimation errors. For instance, in \texttt{Bio} dataset, the experimental result in Table~\ref{tab:realworld} implies that these errors can lead to overly large prediction sets and poor interpretability.
 The reason is that when the estimated density is poor, a large portion of calibration responses $y_i$ may fall in regions where $\hat{f}(y_i \mid x_i) = 0$, resulting in conformity scores collapsing to zero. If the proportion of such ties exceeds $\alpha$, the resulting prediction sets may degenerate into the entire output domain, yielding empirical coverage close to 100\%. A common solution is to assign a small random value to break these ties and restore the coverage of $1-\alpha$. However, this typically leads to poorer interpretability. To address this issue, our methods apply smoothing operation to $\hat{f}$, which effectively merges spurious zero-density regions into surrounding modes. This process breaks excessive ties and restores a more meaningful distribution of conformity scores, allowing the coverage to return to the target level of $1 - \alpha$ while keeping the prediction sets informative. In this case, smoothing may increase the interval number, which appears at first sight to conflict with our theoretical guarantee. However, this apparent discrepancy is explained by the violation of an assumption in Theorem~\ref{thm: interval number}, namely that the set \(A_t := \{x : f(x) = t, f'(x) \neq 0\}\) is finite for all \(t \in \mathcal{B}\).
 We refer to Figure~\ref{fig:coverage100} for illustration.

\section{Conclusion}

\label{sec:conclusion}

In this work, we propose a smoothing-based framework to enhance the interpretability of prediction sets, particularly for methods based on conditional density estimation. Theoretically, we show that smoothing preserves desirable properties of the prediction sets. Empirically, our method achieves a favorable trade-off between interval length and number across both synthetic and real-world datasets, thereby substantially improving the interpretability of the prediction sets. Consequently, this paper suggests that smoothing serves as a practical enhancement for conformal prediction methods based on conditional density estimation. It might be interesting to explore task-specific smoothing techniques that adaptively balance interval length and connectivity for interpretability in future work.


\clearpage
\bibliography{main}
\bibliographystyle{unsrtnat}
\clearpage
\appendix
\clearpage
\begin{center}
    \begin{huge}
    Appendix
    \end{huge}
\end{center}

In Section~\ref{appendix:other disc}, we provide several additional discussions to further contextualize and clarify the practical scope and our contributions. In Section~\ref{sec: appendix omitted proofs}, we provide some detailed proofs omitted in the main context. Specifically, in Section~\ref{appendix:a}, we provide definitions of essential concepts that are repeatedly invoked in the proofs that follow. In Section~\ref{append: coverage-proof}, we rigorously prove Theorem~\ref{thm: coverage}. In Section~\ref{append: C}, we rigorously prove Theorem~\ref{thm: interval number}. In Section~\ref{append:proof narrow valley}, we rigorously prove Theorem~\ref{thm: narrow valley}. In Section~\ref{append: B}, we rigorously prove Theorem~\ref{thm: interval length}. In Section~\ref{appendix:D}, we provide another special case on reducing interval number. In Section~\ref{appendix:experiment}, we provide the detailed experiment settings, results and other analysis. 


\section{Additional discussions}
\label{appendix:other disc}

\textbf{Scope of our method: regression.}
Regression problems are among the most common settings in practical machine learning applications such as house price prediction, medical risk estimation, and energy demand forecasting, and classical conformal prediction methods—such as CQR~\citep{romano2019conformalized}—have been primarily developed for regression tasks. The method proposed in this paper is specifically tailored to regression problems, where the prediction set is continuous. In contrast, classification tasks naturally produce discrete prediction sets (i.e., subsets of labels), and thus do not suffer from the same issues of fragmented intervals or interpretability challenges that arise in regression. Therefore, the smoothing technique we introduce is meaningful primarily in the regression setting.

\textbf{Why not simply merge nearby intervals?}  
One might consider a simpler post-processing heuristic, such as merging CD-split intervals that lie within a fixed distance. However, SCD-split offers a fundamental advantage in that it fully preserves the theoretical coverage guarantees of conformal prediction. Any operation that alters prediction sets after their construction based on their geometric configuration can violate the exchangeability principle between calibration and test samples, thereby invalidating the \(1-\alpha\) coverage guarantee. In contrast, SCD-split integrates the smoothing operation into the conformal procedure before the quantile computation, ensuring that all calibration and test points are treated symmetrically. This principled design provides not only empirical effectiveness but also rigorous theoretical soundness.

\textbf{Feasibility of the target number.} One may argue our method cannot work when the user-specified target number \(K_{\text{target}}\) exceeds the number of disjoint intervals produced by the original CD-split procedure. However, in practice users typically prefer prediction sets with a relatively small number of disjoint intervals for interpretability and ease of decision making, so \(K_{\text{target}}\) is in most cases naturally modest and well below this upper bound. Moreover, if CD-split itself yields fewer disjoint intervals than \(K_{\text{target}}\), that outcome is informative: it indicates that the underlying conditional distribution does not support as many distinct high-probability regions as the user initially expected. In such situations, our framework provides a transparent diagnostic and guidance, allowing users to simply adjust \(K_{\text{target}}\) downward so that the interpretability constraint aligns with the intrinsic complexity of the data rather than with a prior guess.

\textbf{Additional data split for smoothing parameter tuning.} 
Although reserving a small validation subset to select the smoothing parameter \(\sigma\) slightly reduces data efficiency, 
this data-driven procedure enables us to identify a better parameter in a principled and effective manner, 
leading to better overall performance. 
Moreover, such a design is common in conformal prediction; 
for instance, the RAPS method~\citep{angelopoulos2022uncertaintysetsimageclassifiers} similarly sets aside a small portion of data to choose its tuning parameter \(\tau\).

\section{Omitted proofs}
\label{sec: appendix omitted proofs}
\subsection{Some Definitions and Propositions}
\label{appendix:a}

\begin{definition}[Randomized Smoothing]
    \label{def: randomized smoothing}
    Given base function $f$ and input $x\in\bR^d$, define smoothed function $\tilde{f}$. Specifically, 
    \begin{align*}
        \tilde{f}^{\text{RS}}(x) = \bE(f(x+\delta)),
    \end{align*}
    where $\delta\sim\cN(0,\sigma^2 I_d)$. We call it $\sigma$-Randomized Smoothing. In practice, we use the Monte Carlo method to deploy $(\sigma,n)$-Randomized Smoothing as,
    \begin{align*}
        \tilde{f}^{\text{RS}}_n(x) = \frac{1}{n}\sum_{i=1}^n f(x + \delta_i),\; \forall x\in\bR^d,
    \end{align*}
    where $\delta_1,\delta_2,\cdots, \delta_n\overset{\text{i.i.d.}}{\sim}\cN(0, \sigma^2 I_d)$.
\end{definition}

\begin{definition}[Fourier Smoothing with Gaussian low-pass filtering]
    \label{def: fourier smoothing}
    Define the Gaussian low-pass filtering function as
    \begin{align*}
        H(w) = e^{-2\pi^2 \sigma^2 w^2} = e^{-\alpha w^2},
    \end{align*}
    where the $\alpha$ is bandwidth. Therefore, we can formalize the Fourier smoothing with Gaussian low-pass filtering as
    \begin{align*}
        \tilde{f}^{\text{FS}}(t) = \frac{1}{2\pi}\int_{\bR}\left[\int_{\bR} f(\tau)e^{-iw\tau} d\tau\right]H(w) e^{iwt} dw,
    \end{align*}
    where $\tilde{f}^{\text{FS}}$ is smoothed $f$.
\end{definition}

\begin{definition}[Profile Distance~\citep{izbicki2021cdsplithpdsplitefficientconformal}]
\label{def:profile}
Given \(\mathbf{x}\in\mathcal{X}\) and a conditional density estimator \(\hat{f}\), we define the estimated conditional CDF
\[
\hat{H}(z\mid \mathbf{x}) := \int_{\{y:\,\hat{f}(y\mid \mathbf{x}) \le z\}}\hat{f}(y\mid \mathbf{x})\,dy.
\]
The profile distance between \(\mathbf{x}_a\) and \(\mathbf{x}_b\) is the squared \(L^2\) distance between their estimated conditional CDFs:
\[
d^2(\mathbf{x}_a,\mathbf{x}_b) := \int_{-\infty}^{\infty}\bigl[\hat{H}(z\mid \mathbf{x}_a)-\hat{H}(z\mid \mathbf{x}_b)\bigr]^2dz.
\]
\end{definition}

\begin{prop}[Convergence to the highest predictive density set~\citep{izbicki2021cdsplithpdsplitefficientconformal}]
\label{prop:hpd}
The highest predictive density set, $\mathcal{C}_{1-\alpha}^*(x)$, is the region with the smallest Lebesgue measure with $1-\alpha$ coverage:
\[
\mathcal{C}_{1-\alpha}^*(x) := \left\{ y : f(y \mid x) \geq q_{\alpha}(x) \right\},\quad \text{where } q_{\alpha}(x) \text{ is the } \alpha \text{ quantile of } f(Y \mid x).
\]

A conformal prediction method converges to the highest predictive density set if:
\[
\mathbb{P}\left( Y_{n+1} \in \mathcal{C}_{1-\alpha}^*(X_{n+1}) \Delta \mathcal{C}_{1-\alpha}(X_{n+1}) \right) = o(1),
\quad \text{where } A \Delta B := (A \cap B^c) \cup (B \cap A^c).
\]
The CD-split method satisfies this convergence property when the estimated conditional density $\hat{f}(y \mid x)$ approaches the true density $f(y \mid x)$.

\end{prop}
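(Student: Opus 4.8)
The statement bundles two essentially independent claims: (i) the variational characterization of the highest predictive density (HPD) set $\mathcal{C}_{1-\alpha}^*(x)$ as the minimum-Lebesgue-measure set with $1-\alpha$ conditional coverage, and (ii) the assertion that the CD-split prediction set converges to it in symmetric-difference probability once $\hat{f}\to f$. The plan is to treat them separately.

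For (i) I would use the bathtub principle, i.e.\ a Neyman--Pearson--type exchange argument. Fix $x$, assume the conditional law of $Y\mid x$ has no atoms so that $\int_{\mathcal{C}_{1-\alpha}^*(x)} f(y\mid x)\,dy = 1-\alpha$, and let $A$ be any measurable set with $\int_A f(y\mid x)\,dy\ge 1-\alpha$. Splitting both $\mathcal{C}_{1-\alpha}^*(x)$ and $A$ over the three pieces $\mathcal{C}_{1-\alpha}^*(x)\cap A$, $\mathcal{C}_{1-\alpha}^*(x)\setminus A$, $A\setminus \mathcal{C}_{1-\alpha}^*(x)$, the two coverage facts give $\int_{\mathcal{C}_{1-\alpha}^*(x)\setminus A} f \le \int_{A\setminus \mathcal{C}_{1-\alpha}^*(x)} f$; since $f\ge q_\alpha(x)$ on $\mathcal{C}_{1-\alpha}^*(x)\setminus A$ and $f< q_\alpha(x)$ on $A\setminus \mathcal{C}_{1-\alpha}^*(x)$, this yields $q_\alpha(x)\,\mathrm{Leb}(\mathcal{C}_{1-\alpha}^*(x)\setminus A)\le q_\alpha(x)\,\mathrm{Leb}(A\setminus \mathcal{C}_{1-\alpha}^*(x))$, hence $\mathrm{Leb}(\mathcal{C}_{1-\alpha}^*(x))\le \mathrm{Leb}(A)$ whenever $q_\alpha(x)>0$. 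A brief remark would dispatch the boundary cases (an atom of $f(Y\mid x)$ at the critical level, handled by randomization; and $q_\alpha(x)=0$, where the claim is trivial).

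For (ii) I would show (a) the empirical CD-split threshold is consistent for the oracle quantile, and (b) superlevel-set continuity then transfers this to the sets. Recall CD-split partitions $\mathcal{X}$ by profile distance (Definition~\ref{def:profile}) and, inside the cell $a(x)$ containing $x$, takes $\hat{t}$ to be the empirical $\alpha$-quantile of $\{\hat{f}(Y_i\mid X_i):X_i\in a(x)\}$ over the calibration sample. Assuming $\hat f\to f$ in a mode strong enough that $\|\hat f(\cdot\mid x)-f(\cdot\mid x)\|_\infty\to 0$ and that the within-cell distribution of $\hat f(Y\mid X)$ converges to that of $f(Y\mid x)$, one gets $\hat t\to q_\alpha(x)$ in probability. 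For (b), the symmetric difference $\mathcal{C}_{1-\alpha}^*(x)\,\Delta\,\{y:\hat f(y\mid x)\ge\hat t\}$ is contained in $\{y:|f(y\mid x)-q_\alpha(x)|\le \|\hat f(\cdot\mid x)-f(\cdot\mid x)\|_\infty+|\hat t-q_\alpha(x)|\}$, whose conditional probability under $Y_{n+1}\mid X_{n+1}=x$ tends to $0$ as the right-hand side shrinks, provided $P(f(Y\mid x)=q_\alpha(x)\mid x)=0$ (no plateau of $f$ at the critical level). Taking expectations over $X_{n+1}$ and invoking dominated convergence yields the unconditional $o(1)$ statement.

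The hard part will be making step (ii) rigorous around the data-dependent clustering: one must ensure the profile-distance partition does not asymptotically lump together covariates whose conditional densities differ materially, so that the within-cell empirical quantile estimates $q_\alpha(x)$ rather than some mixture quantile. I would control this using consistency of the estimated conditional CDFs $\hat H(\cdot\mid \mathbf{x})$ (so that profiles separated under $f$ land in distinct cells) together with the structural fact, central to CD-split's analysis in~\citet{izbicki2021cdsplithpdsplitefficientconformal}, that $q_\alpha(x)$ depends on $x$ only through the profile of $f(\cdot\mid x)$; everything else — the bathtub principle, empirical-quantile consistency, and superlevel-set continuity — is standard.
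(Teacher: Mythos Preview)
The paper does not prove this proposition. Proposition~\ref{prop:hpd} is listed in the appendix under ``Some Definitions and Propositions'' and is explicitly attributed to \citet{izbicki2021cdsplithpdsplitefficientconformal}; it is quoted as background, not established here. Hence there is no proof in the present paper to compare your proposal against.

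That said, your sketch is a reasonable outline of how one would argue the two claims. Part~(i) is the standard bathtub/Neyman--Pearson exchange argument and is fine as stated. Part~(ii) is the substantive part, and you have correctly identified the crux: the data-dependent profile-distance partition must asymptotically group only covariates with (nearly) identical conditional density profiles, so that the within-cell empirical quantile targets $q_\alpha(x)$ rather than a mixture quantile. Your plan to invoke the structural fact that $q_\alpha(x)$ depends on $x$ only through the profile of $f(\cdot\mid x)$, together with consistency of $\hat H$, is exactly the route taken in the cited CD-split paper; the remaining ingredients (threshold consistency via Glivenko--Cantelli within cells, superlevel-set continuity under a no-plateau condition, dominated convergence over $X_{n+1}$) are standard. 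If you wanted to flesh this out, the main technical conditions you would need to state explicitly are: the partition diameter (in profile distance) shrinking, the number of calibration points per cell diverging, and uniform-in-$x$ convergence of $\hat f$ to $f$ in a suitable norm---all of which are assumptions in the original CD-split analysis rather than things you can derive from ``$\hat f\to f$'' alone.
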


\subsection{Proof of Theorem~\ref{thm: coverage}}\label{append: coverage-proof}

Throughout we adopt the notation of the main text.
Write
\[
\mathcal D_{n+1}
    :=\{(\mathbf X_i,Y_i)\}_{i=1}^{n+1}
    \quad\text{and}\quad
    \mathcal D_{n}
    :=\{(\mathbf X_i,Y_i)\}_{i=1}^{n}.
\]
Assume $\mathcal D_{n+1}$ is \emph{exchangeable}.
Fix a random split $\mathcal D_{n}=\mathcal D_{\mathrm{tr}}\cup\mathcal D_{\mathrm{cal}}$
with $|\mathcal D_{\mathrm{tr}}|=n_{\mathrm{tr}}$ and
$|\mathcal D_{\mathrm{cal}}|=m:=n-n_{\mathrm{tr}}$.
The SCD–split algorithm proceeds in three steps:

\begin{enumerate}
\item[\textbf{(i)}] \textbf{Model fitting.}
    Using only $\mathcal D_{\mathrm{tr}}$ we construct a conditional
    density estimator $\hat f(\,\cdot\mid \mathbf x)$,
    then apply Fourier smoothing with a Gaussian kernel
    to obtain $\tilde f^{\mathrm{FS}}(\,\cdot\mid \mathbf x)$.
    Both operations are deterministic functions of
    $\mathcal D_{\mathrm{tr}}$; hence
    $\tilde f^{\mathrm{FS}}$ is $\sigma(\mathcal D_{\mathrm{tr}})$–measurable.

\item[\textbf{(ii)}] \textbf{Non-conformity scores.}
    Define the \emph{density-level score}
    \[
      S\bigl((\mathbf x,y);\tilde f^{\mathrm{FS}}\bigr)
      := \tilde f^{\mathrm{FS}}(y\mid \mathbf x),
    \quad (\mathbf x,y)\in\mathbb R^{d}\times\mathbb R.
    \]
    Because $S$ depends on $(\mathbf x,y)$ only through the symmetric
    function $\tilde f^{\mathrm{FS}}$, it is exchangeable in its arguments
    conditional on $\mathcal D_{\mathrm{tr}}$.
    For every $(\mathbf X_i,Y_i)\in\mathcal D_{\mathrm{cal}}$ set
    $S_i:=S\bigl((\mathbf X_i,Y_i);\tilde f^{\mathrm{FS}}\bigr)$ and
    for the new pair $(\mathbf X_{n+1},Y_{n+1})$ set
    $S_{n+1}:=S\bigl((\mathbf X_{n+1},Y_{n+1});\tilde f^{\mathrm{FS}}\bigr)$.

\item[\textbf{(iii)}] \textbf{Quantile and prediction band.}
    Let
    \[
        q_{\alpha}
        :=\operatorname{Quantile}_{\alpha}
          \bigl(\,S_i\;:\;(\mathbf X_i,Y_i)\in\mathcal D_{\mathrm{cal}}\bigr),
    \]
    i.e.\ the $(\lceil (m+1)(\alpha)\rceil/(m+1))$-th empirical
    quantile of the calibration scores.
    The SCD-split prediction band is
    \[
      \mathcal C_{1-\alpha}^{S}(\mathbf x)
      :=\bigl\{y\in\mathbb R : S\bigl((\mathbf x,y);\tilde f^{\mathrm{FS}}\bigr)
                                 \ge q_{\alpha}\bigr\}.
    \]
\end{enumerate}

\bigskip
\noindent
\begin{proof}
\emph{Condition on the training $\sigma$-field}
$\mathcal F_{\mathrm{tr}}:=\sigma(\mathcal D_{\mathrm{tr}})$.
Given $\mathcal F_{\mathrm{tr}}$, the smoothed density
$\tilde f^{\mathrm{FS}}$ is fixed, while the $(m+1)$ scores
\[
  \bigl(S_i : (\mathbf X_i,Y_i)\in\mathcal D_{\mathrm{cal}}\bigr)
  \quad\text{and}\quad
  S_{n+1}
\]
are measurable functions of
$\bigl((\mathbf X_i,Y_i)\bigr)_{i\in\mathcal I_{\mathrm{cal}}\cup\{n+1\}}$
and remain \emph{exchangeable} because $\mathcal D_{n+1}$ was exchangeable.
Standard split-conformal theory ~\citep{vovk2005algorithmic} then implies
\[
  \mathbb P\!\bigl(S_{n+1} \ge q_{\alpha}\bigr)
  \;\ge\;1-\alpha.
\]
Equivalently,

\[
  \mathbb P\!\bigl(Y_{n+1}\in
      \mathcal C_{1-\alpha}^{S}(\mathbf X_{n+1})\bigr)
  \;\ge\;1-\alpha.
\]
\end{proof}

\subsection{Proof of Theorem~\ref{thm: interval number}}\label{append: C}

\begin{definition}[The number of sign variations]
    Let $\mathcal{F}$ denote the space of all real-valued measurable and locally bounded functions on $\bR$.
    For any $f \in \mathcal{F}$, we define the number of sign variations of $f$ as
    \begin{align*}
    v(f) := \sup \left\{ \operatorname{VarSign}(f(x_1), f(x_2), \dots, f(x_n)) \;\middle|\; n \in \mathbb{N},\, x_1 < x_2 < \cdots < x_n \in \mathbb{R} \right\},
    \end{align*}
    where $\operatorname{VarSign}(a_1, \dots, a_n)$ is defined to be the number of sign changes in the sequence $(a_1, \dots, a_n)$ after removing all zero entries. Specifically, let $(a_{i_1}, \dots, a_{i_k})$ be the nonzero subsequence of $(a_1, \dots, a_n)$, and define $s_j := \operatorname{sgn}(a_{i_j}) \in \{-1, +1\}$. Then
    \begin{align*}
    \operatorname{VarSign}(a_1, \dots, a_n) := \sum_{j=1}^{k-1} \bI(s_j \cdot s_{j+1} = -1),
    \end{align*}
    where $\bI(\cdot)$ denotes the indicator function.
\end{definition}

\begin{definition}[Variation-Diminishing Transformation]
    Let $\Lambda: \bR\to\bR$, define operator
    \begin{align*}
        T[f](x) := \int_\bR \Lambda(x-t)f(t) dt.
    \end{align*}
    We say $T$ is a \textbf{variation-diminishing transformation} if $\forall f:\bR\to\bR$ and $f$ is bounded and measurable, there holds
    \begin{align*}
        v(T[f]) \le v(f).
    \end{align*}
\end{definition}

\begin{lemma}[Theorem in \citep{doi:10.1073/pnas.34.4.164}]
    \label{lem: conditions for variation-diminishing}
    For the convolution transformation defined as
    \begin{align*}
        g(x) = \int_\bR \Lambda(x-t)f(t) dt.
    \end{align*}
    It is variation-diminishing if $\Lambda$ and $f$ satisfies
    \begin{enumerate}
        \item $0 < \int_\bR \left|\Lambda(x)\right| dx < \infty;$
        \item $f$ is bounded and measurable.
    \end{enumerate}
    That is, if $\Lambda$ and $f$ satisfies conditions above, there holds
    \begin{align*}
        v(g) \le v(f).
    \end{align*}
\end{lemma}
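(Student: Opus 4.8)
The plan is to prove the inequality $v(g)\le v(f)$ for the kernel that actually appears in our framework, namely the Gaussian $\Lambda(x)=\phi_\sigma(x)=\tfrac{1}{\sqrt{2\pi}\,\sigma}e^{-x^2/(2\sigma^2)}$ (equivalently, the inverse Fourier transform of the low-pass filter $H_\sigma$), by exploiting the \emph{total positivity} of this kernel together with the finite-dimensional variation-diminishing property of totally nonnegative matrices. I should note that absolute integrability of $\Lambda$ alone is not the structural input driving the inequality; what matters is that $\phi_\sigma$ is a P\'olya frequency function, and this is precisely what Schoenberg's theorem isolates. I will therefore phrase the argument so that it applies to any strictly totally positive convolution kernel, with the Gaussian as the case we use. \emph{Step 1 (total positivity of $\phi_\sigma$).} Write $K(x,t):=\phi_\sigma(x-t)$ and use $\phi_\sigma(x-t)=\tfrac{1}{\sqrt{2\pi}\,\sigma}\,e^{-x^2/(2\sigma^2)}\,e^{xt/\sigma^2}\,e^{-t^2/(2\sigma^2)}$. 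Multiplying rows and columns of a matrix by positive scalars does not change the sign of any minor, so strict total positivity of $K$ reduces to that of the exponential kernel $(x,t)\mapsto e^{xt/\sigma^2}$, i.e.\ to $\det\bigl[e^{x_it_j/\sigma^2}\bigr]_{i,j}>0$ for all $x_1<\cdots<x_n$ and $t_1<\cdots<t_n$. This is the classical generalized Vandermonde inequality, which I would obtain by induction on $n$ (factor $e^{x_1t_j/\sigma^2}$ out of column $j$ and use strict monotonicity) or by noting that $\{e^{xt_j}\}_j$ forms an extended Chebyshev system. Hence $K$ is strictly totally positive of all orders.

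\emph{Step 2 (from a TP kernel to a TP matrix at test points).} Suppose toward a contradiction that $v(g)\ge m$ while $v(f)\le m-1$, and pick $x_1<\cdots<x_{m+1}$ at which $g$ takes strictly alternating nonzero values. An elementary argument (using the supremum definition of $v$) writes $\mathbb{R}$ as a union of $k\le m$ intervals $J_1<\cdots<J_k$ on each of which $f$ retains a single sign $\delta_r\in\{-1,+1\}$ (intervals on which $f\equiv 0$ are discarded). Setting $c_r(x_i):=\int_{J_r}K(x_i,t)\,|f(t)|\,dt\ge 0$, we get $g(x_i)=\sum_{r=1}^k\delta_r\,c_r(x_i)$, and the $(m+1)\times k$ matrix $C=[c_r(x_i)]$ is totally nonnegative: by the basic composition (Cauchy–Binet) formula, any $k\times k$ minor of $C$ equals
\[
\int_{J_1}\!\!\cdots\!\int_{J_k}\det\bigl[K(x_{i_s},t_r)\bigr]_{1\le s,r\le k}\ \prod_{r=1}^k|f(t_r)|\ dt_1\cdots dt_k\ \ge\ 0,
\]
since $t_r\in J_r$ forces $t_1<\cdots<t_k$ and the integrand is nonnegative by Step 1. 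Boundedness of $f$ and integrability of $\phi_\sigma$ make these integrals finite and justify interchanging integration with the determinant expansion.

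\emph{Step 3 (finite-dimensional variation diminishing).} The vector $\bigl(g(x_1),\dots,g(x_{m+1})\bigr)$ equals $C\delta$ with $\delta=(\delta_1,\dots,\delta_k)^\top$ and $k\le m$. It is classical that a totally nonnegative matrix $C$ satisfies $\operatorname{VarSign}(C\delta)\le \operatorname{VarSign}(\delta)$ for every $\delta$; since $\operatorname{VarSign}(\delta)\le k-1\le m-1$ while $C\delta$ has exactly $m$ sign changes by construction, this is a contradiction, so $v(g)\le v(f)$. I would include a short self-contained proof of this finite fact: factor $C$ as a product of nonnegative bidiagonal elementary matrices times a nonnegative diagonal (the standard Whitney–Loewner factorization of totally nonnegative matrices) and observe that left multiplication by each such factor cannot create a new sign change; an alternative is a direct induction tracking the signs of the relevant minors via Cauchy–Binet.

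\emph{Main obstacle.} The genuinely delicate step is Step 2: turning a totally positive \emph{kernel} into a totally nonnegative \emph{matrix} via the composition formula must be carried out with attention to integrability (so that the interchange of $\int$ and $\det$ is legitimate) and to degenerate cases — $f$ vanishing on sets of positive measure, some $J_r$ degenerating to a point, or $|f|$ failing to be bounded below — all of which I would handle by discarding null pieces and, where needed, a limiting/perturbation argument on the test points combined with the supremum definition of $v(\cdot)$. I would also flag explicitly that the inequality uses the Gaussian (P\'olya-frequency) structure of the kernel, not merely $0<\int_{\mathbb{R}}|\Lambda(x)|\,dx<\infty$, so the stated hypotheses enter only through this stronger property. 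Steps 1 and 3 are classical and essentially self-contained, so the main effort concentrates on assembling Step 2 cleanly.
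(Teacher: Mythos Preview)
The paper does not prove this lemma at all: it records it as a cited result from Schoenberg's 1948 PNAS note and then simply invokes it, together with the observation that Fourier smoothing is Gaussian convolution, to conclude that the smoothed density has no more sign changes than the original. So there is no ``paper's own proof'' to compare against; your proposal goes well beyond what the paper supplies.

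Your approach is the correct and classical one: the Gaussian is a P\'olya frequency function, hence strictly totally positive as a kernel, and totally positive kernels give variation-diminishing integral transforms via the basic composition formula and the finite-dimensional variation-diminishing property of totally nonnegative matrices. Steps~1--3 are standard and the argument is sound. Your flag is also well taken: the hypotheses as written in the lemma (absolute integrability of $\Lambda$ and boundedness of $f$) are \emph{not} by themselves what forces $v(g)\le v(f)$; Schoenberg's theorem is precisely that a convolution kernel is variation-diminishing iff it is, up to sign, a P\'olya frequency function. The paper only needs the Gaussian case, which your proof handles directly, so this discrepancy in the lemma's statement is harmless for the downstream use but worth noting.

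One small remark on Step~2: when $v(f)\le m-1$, decomposing $\mathbb{R}$ into at most $m$ sign-constant intervals from the supremum definition of $v(\cdot)$ requires a short argument (the finitely many sign changes need not be realized at points, only as a supremum over finite test sets), but this is routine and you have already anticipated the degenerate cases. The integrability justification for interchanging $\int$ and $\det$ is immediate here since $f$ is bounded and $\phi_\sigma\in L^1$.
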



\begin{proof}
    Without loss of generality, assume $t=0$, we have
    \begin{align*}
        M = \left\lfloor\frac{v(f) + 1}{2}\right\rfloor,
    \end{align*}
    where $M$ is the number of disconnected intervals of $A_t$. Since Lemma~\ref{lem: randomized and fourier smoothing are actually gaussian kernel conv} tells us that the fourier smoothing transformation is actually Gaussian Kernel Convolution, the fourier smoothing transformation is variation-diminishing by Lemma~\ref{lem: conditions for variation-diminishing}. Therefore,
    \begin{align*}
        M^\prime = \left\lfloor\frac{v(\tilde{f}^{\text{FS}}) + 1}{2}\right\rfloor \le \left\lfloor\frac{v(f) + 1}{2}\right\rfloor = M.
    \end{align*}
\end{proof}

\subsection{Proof of Theorem~\ref{thm: narrow valley}}\label{append:proof narrow valley}

\begin{proof}
Without loss of generality relabel the two components as \((a_1,b_1)\) and \((a_2,b_2)\) with gap \((b_1,a_2)\).
Let \(m_{bc}:=\sup_{x\in(b_1,a_2)}f(x)\le t-\varepsilon\).

\textbf{Step 1: the midpoint rises above the threshold.}
Set \(x^\star:=\tfrac{b_1+a_2}{2}\).  By convolution,
\[
\tilde f^{\mathrm{FS}}(x^\star)
        =\int_{\bR}\phi_\sigma(x^\star-s)f(s)\,ds
        =\int_{(a_1,b_1)\cup(a_2,b_2)}\!\!\!\!\!\phi_\sigma(x^\star-s)f(s)\,ds
          +\int_{b_1}^{a_2}\phi_\sigma(x^\star-s)f(s)\,ds.
\]
Since \(f(s)\le m_{bc}\) on the valley, we have
\[
\tilde f^{\mathrm{FS}}(x^\star)
        \;\ge\;\bigl(t-m_{bc}\bigr)
               \!\!\int_{|u|\ge\delta/2}\phi_\sigma(u)\,du
               + m_{bc}.
\]

\textbf{Step 2: the whole valley rises above the threshold.}
The kernel \(\phi_\sigma\) is continuous, strictly positive and unimodal,
hence \(\tilde f^{\mathrm{FS}}\) attains its minimum on \([b_1,a_2]\) at the endpoints.
But for \(s\in(a_1,b_1)\cup(a_2,b_2)\) the integrand contribution to \(\tilde f^{\mathrm{FS}}(b_1)\) and \(\tilde f^{\mathrm{FS}}(a_2)\) is no smaller than at \(x^\star\), so
\[
\tilde f^{\mathrm{FS}}(b_1)\;\ge\;t,\qquad
\tilde f^{\mathrm{FS}}(a_2)\;\ge\;t.
\]
Continuity of \(\tilde f^{\mathrm{FS}}\) yields \(\tilde f^{\mathrm{FS}}(x)\ge t\) for every \(x\in[b_1,a_2]\).
Thus \((a_1,b_2)\subseteq B_t\) and the two components merge.

\textbf{Step 3: counting components.}
At least one pair of components of \(A_t\) has merged, so \(M'\le M-1\).
\end{proof}

\subsection{Proof of Theorem~\ref{thm: interval length}}\label{append: B}
\subsubsection{Technique Lemma}
\begin{lemma}[Hoeffding's Inequality, Theorem 2.6.2 in \citep{Vershynin_2018}]
    \label{lem: Hoeffding's inequality}
    Let $X_1, X_2,\cdots, X_n$ be zero-mean independent sub-Gaussian random variables. Then, for any $t > 0$, we have
    \begin{align*}
        \bP\left(\left|\sum_{i=1}^n X_i\right|\ge t\right) \le 2\exp\left(-\frac{ct^2}{\sum_{i=1}^n \|X_i\|^2_{\psi_2}}\right),
    \end{align*}
    where $c > 0$ is an absolute constant and $\|\cdot\|_{\psi_2}$ is sub-Gaussian norm defined by:
    \begin{align*}
        \|X\|_{\psi_2} := \inf\left\{ c\ge 0 : \bE\left(e^{X^2/c^2}\right)  \le 2\right\}.
    \end{align*}
\end{lemma}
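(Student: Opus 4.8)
The plan is to run the classical Chernoff (moment–generating-function) argument; the only substantive ingredient beyond routine calculus is the standard equivalence between the Orlicz-norm definition of $\|\cdot\|_{\psi_2}$ used in the statement and a bound on the centred exponential moments. \emph{First}, I would invoke the sub-Gaussian equivalences and the centring property from \citep{Vershynin_2018} (the ``sub-gaussian properties'' proposition together with the fact that centring a sub-Gaussian variable only changes its $\psi_2$-norm by an absolute factor): there is an absolute constant $C_0>0$ such that every zero-mean sub-Gaussian variable $X$ satisfies
\[
\bE\exp(\lambda X)\le \exp\!\bigl(C_0\lambda^2\|X\|_{\psi_2}^2\bigr)\qquad\text{for all }\lambda\in\bR.
\]
The zero-mean hypothesis is exactly what allows one to convert the symmetric tail control encoded by the $\psi_2$-norm into a centred MGF estimate with no extra first-order term.

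\emph{Second}, I would tensorize using independence. Writing $S_n:=\sum_{i=1}^n X_i$ and $\Sigma:=\sum_{i=1}^n\|X_i\|_{\psi_2}^2$, for every $\lambda\in\bR$,
\[
\bE\exp(\lambda S_n)=\prod_{i=1}^n\bE\exp(\lambda X_i)\le\exp\!\bigl(C_0\lambda^2\Sigma\bigr).
\]
For $\lambda>0$, Markov's inequality applied to $e^{\lambda S_n}$ gives, for any $t>0$,
\[
\bP(S_n\ge t)\le e^{-\lambda t}\,\bE\exp(\lambda S_n)\le\exp\!\bigl(-\lambda t+C_0\lambda^2\Sigma\bigr).
\]

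\emph{Third}, I would optimize the quadratic exponent: the choice $\lambda=t/(2C_0\Sigma)$ minimizes $-\lambda t+C_0\lambda^2\Sigma$ and yields $\bP(S_n\ge t)\le\exp(-t^2/(4C_0\Sigma))$. Since $-X_1,\dots,-X_n$ are again zero-mean independent sub-Gaussians with the same $\psi_2$-norms, the identical argument bounds $\bP(-S_n\ge t)$, and a union bound gives
\[
\bP(|S_n|\ge t)\le 2\exp\!\Bigl(-\frac{t^2}{4C_0\Sigma}\Bigr),
\]
which is the assertion with $c:=1/(4C_0)$.

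The only real obstacle is the first step — obtaining the centred MGF bound with a constant depending on $X$ only through $\|X\|_{\psi_2}$. If a self-contained argument is preferred to citing \citep{Vershynin_2018}, I would expand $e^{\lambda X}$ in a power series, use the moment characterization $\bE|X|^p\le (C_1\|X\|_{\psi_2})^p\,p^{p/2}$ (itself obtained from the definition of $\|X\|_{\psi_2}$ via Markov's inequality and integration of the resulting tail bound), invoke $\bE X=0$ to annihilate the linear term, and resum the series $\sum_{p\ge 2}(C_1\lambda\|X\|_{\psi_2})^p\,p^{p/2}/p!$ using $p^{p/2}/p!\le (e/\sqrt{p})^p$; the sum is dominated by a geometric-type series and collapses to an expression of the form $\exp(C_0\lambda^2\|X\|_{\psi_2}^2)$ after absorbing constants. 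Everything downstream of this lemma is the routine Chernoff computation sketched above.
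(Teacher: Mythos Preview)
Your proof is correct and is the standard Chernoff/MGF argument. Note, however, that the paper does not actually prove this lemma: it is stated as a citation of Theorem~2.6.2 in \citep{Vershynin_2018} and used as a black box, so there is no ``paper's own proof'' to compare against. Your sketch matches Vershynin's approach essentially line for line.
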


\begin{lemma}
    \label{lem: bound the randomness of randomized smoothing}
    Assume $f: \bR \to \bR$ is $L$-Lipschitz. $\forall \eta\in(\frac{1}{2}, 1)$, there exist $C>0$ an absolute constant, such that with probability $1-\eta$,
    \begin{align*}
        \|f - \tilde{f}^{\text{RS}}_n\|_{\infty} < 
        L\sigma\sqrt{\frac{\log(2/\eta)}{Cn} + \frac{2}{\pi}},
    \end{align*}
    where $\tilde{f}^{\text{RS}}_n$ is $(\sigma, n)$-Randomized Smoothed function.
\end{lemma}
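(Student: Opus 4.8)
\textbf{Proof proposal for Lemma~\ref{lem: bound the randomness of randomized smoothing}.}
The plan is to split the supremum error $\|f-\tilde f^{\mathrm{RS}}_n\|_\infty$ into a \emph{deterministic bias} term $\|f-\tilde f^{\mathrm{RS}}\|_\infty$, comparing $f$ to the exact (infinite-sample) randomized smoothing $\tilde f^{\mathrm{RS}}(x)=\bE f(x+\delta)$, and a \emph{stochastic fluctuation} term $\|\tilde f^{\mathrm{RS}}-\tilde f^{\mathrm{RS}}_n\|_\infty$, comparing the exact smoothing to its $n$-sample Monte Carlo estimate. The target bound has the shape $L\sigma\sqrt{\log(2/\eta)/(Cn)+2/\pi}$, and a natural way to hit it is to show the bias contributes the $L\sigma\sqrt{2/\pi}$ piece and the fluctuation contributes the $L\sigma\sqrt{\log(2/\eta)/(Cn)}$ piece, then combine them under the square root using $\sqrt{a}+\sqrt{b}\le\sqrt{2(a+b)}$ or a direct pointwise argument — the cleanest route is probably to bound the \emph{squared} deviation so the two contributions add naturally inside the root.

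For the bias term, I would use the $L$-Lipschitz property: $|f(x)-\bE f(x+\delta)|\le \bE|f(x)-f(x+\delta)|\le L\,\bE\|\delta\|$, and for $\delta\sim\cN(0,\sigma^2 I_d)$ in the one-dimensional case ($d=1$, which is the relevant setting here since we smooth in the scalar response $y$) we have $\bE|\delta|=\sigma\sqrt{2/\pi}$, giving exactly $\|f-\tilde f^{\mathrm{RS}}\|_\infty\le L\sigma\sqrt{2/\pi}$. For the fluctuation term, fix $x$ and write $\tilde f^{\mathrm{RS}}_n(x)-\tilde f^{\mathrm{RS}}(x)=\frac1n\sum_{i=1}^n\bigl(f(x+\delta_i)-\bE f(x+\delta_i)\bigr)$, a sum of i.i.d.\ zero-mean terms. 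Each summand $f(x+\delta_i)-\bE f(x+\delta_i)$ is a Lipschitz function of the Gaussian $\delta_i$ minus its mean, hence sub-Gaussian with $\|\cdot\|_{\psi_2}\lesssim L\sigma$ (by Gaussian concentration for Lipschitz functions, or one can invoke it as a known fact). Applying Hoeffding's inequality (Lemma~\ref{lem: Hoeffding's inequality}) to the $n$ terms scaled by $1/n$ yields $\bP(|\tilde f^{\mathrm{RS}}_n(x)-\tilde f^{\mathrm{RS}}(x)|\ge s)\le 2\exp(-cns^2/(L^2\sigma^2))$; setting the right-hand side to $\eta$ gives $s=L\sigma\sqrt{\log(2/\eta)/(cn)}$, i.e.\ the claimed $L\sigma\sqrt{\log(2/\eta)/(Cn)}$ with $C=c$.

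The main obstacle — and the point where the stated lemma is slightly delicate — is passing from a pointwise-in-$x$ high-probability bound to a bound on the \emph{supremum} over all $x\in\bR$. A naive union bound does not work over an uncountable index set. The honest fix is either (i) to observe that $\tilde f^{\mathrm{RS}}_n-\tilde f^{\mathrm{RS}}$ is itself $L$-Lipschitz in $x$ (as an average of Lipschitz functions minus a Lipschitz function), discretize $\bR$ on a fine grid, union-bound over the grid, and control the gaps by Lipschitzness — at the cost of mild extra constants and some tail/compactness assumption; or (ii) to note that the randomness $\delta_1,\dots,\delta_n$ is shared across all $x$, so $\sup_x|\tilde f^{\mathrm{RS}}_n(x)-\tilde f^{\mathrm{RS}}(x)|$ is a single Lipschitz functional of $(\delta_1,\dots,\delta_n)\in\bR^n$ (with Lipschitz constant $O(L/\sqrt n)$ in the Euclidean norm) and apply Gaussian concentration directly to that functional, which reproduces the same $\sqrt{\log(2/\eta)/n}$ rate cleanly. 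I would adopt route (ii) as the main argument, since it avoids discretization entirely and yields the sub-Gaussian tail in one stroke; then combine with the bias bound, absorb cross terms into the constant $C$, and conclude.
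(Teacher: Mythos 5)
Your proposal takes a genuinely different route from the paper's, and that route has a real gap exactly where you sense the delicacy. The paper does \emph{not} decompose $\|f-\tilde f^{\mathrm{RS}}_n\|_\infty$ into a bias term $\|f-\tilde f^{\mathrm{RS}}\|_\infty$ plus a Monte Carlo fluctuation term $\|\tilde f^{\mathrm{RS}}-\tilde f^{\mathrm{RS}}_n\|_\infty$. It applies the Lipschitz bound directly to the error: for every $x$,
\[
|f(x)-\tilde f^{\mathrm{RS}}_n(x)|
\;\le\;\frac{1}{n}\sum_{i=1}^n|f(x+\delta_i)-f(x)|
\;\le\;\frac{L}{n}\sum_{i=1}^n|\delta_i|,
\]
and the point is that the right-hand side is a single random variable that does not depend on $x$ at all, so $\|f-\tilde f^{\mathrm{RS}}_n\|_\infty\le\frac{L}{n}\sum_i|\delta_i|$ holds deterministically with no discretization, no chaining, and no concentration-of-a-supremum argument. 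One then splits $\frac{L}{n}\sum_i|\delta_i|=L\,\bE|\delta_1|+\frac{L}{n}\sum_i(|\delta_i|-\bE|\delta_i|)$, uses $\bE|\delta_1|=\sigma\sqrt{2/\pi}$, and applies Hoeffding (Lemma~\ref{lem: Hoeffding's inequality}) to the centered sum of sub-Gaussian variables $|\delta_i|-\bE|\delta_i|$. That is the entire proof; the supremum issue that occupies half of your proposal simply never arises.

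The gap in your route (ii) is that Borell--TIS/Gaussian concentration for the $L/\sqrt n$-Lipschitz functional $\Phi(\delta_1,\dots,\delta_n):=\sup_x|\tilde f^{\mathrm{RS}}_n(x)-\tilde f^{\mathrm{RS}}(x)|$ controls the deviation of $\Phi$ about its mean $\bE\Phi$, not about zero. You would still need to bound $\bE\Phi$, an expected-supremum (empirical-process) quantity that your proposal never addresses and that does not disappear just because the tail is sub-Gaussian; without it, you get $\Phi\le\bE\Phi+O(L\sigma\sqrt{\log(2/\eta)/n})$, which does not close the bound. Your route (i) has the compactness/tail caveat you yourself flag. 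The lesson to take from the paper's argument is to bound the \emph{error itself} by an $x$-free random variable rather than trying to control the fluctuation of the Monte Carlo average around its expectation uniformly in $x$. Your remark that $\sqrt a+\sqrt b$ is not $\sqrt{a+b}$ is a legitimate concern; note it applies to the paper's own concluding display too, since the proof actually produces $L\sigma(\sqrt{\log(2/\eta)/(Cn)}+\sqrt{2/\pi})$ and collapsing it into a single square root changes the coefficient of $2/\pi$, which cannot be absorbed into the $n$-dependent constant $C$.
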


\begin{proof}
    $\forall x\in\bR^d$, there holds
    \begin{align*}
        |f(x) - \tilde{f}^{\text{RS}}_n(x)| \le& \frac{1}{n}\sum_{i=1}^n \left|f(x+\delta_i) - f(x)\right| \\
        \le& \frac{1}{n} \sum_{i=1}^n L\left|\delta_i\right| \\
        \le& \frac{L}{n}\left|\sum_{i=1}^n \left(|\delta_i| - \bE|\delta_i|\right)\right| + L\bE|\delta_i|.
    \end{align*}
    Bound the first term using Lemma~\ref{lem: Hoeffding's inequality}. $\forall \eta \in (\frac{1}{2}, 1)$, there exists $C >0$, such that,
    \begin{align*}
        \bP\left(\frac{L}{n}\left|\sum_{i=1}^n \left(|\delta_i| - \bE|\delta_i|\right)\right| \ge \sqrt{\frac{L^2\sigma^2\log(2/\eta)}{Cn}}\right) \le \eta.
    \end{align*}
    Since $\bE|\delta_i| = \sigma\sqrt{2/\pi}$ and the upper bound is consistent for all $x\in\bR$, we have
    \begin{align*}
         \|f - \tilde{f}^{\text{RS}}_n\|_{\infty} < 
        L\sigma\sqrt{\frac{\log(2/\eta)}{Cn} + \frac{2}{\pi}}
    \end{align*}
    holds with probability $1-\eta$.
\end{proof}

\begin{lemma}
    \label{lem: randomized and fourier smoothing are actually gaussian kernel conv}
    Fourier smoothing with Gaussian low-pass filtering is equivalent to the case of randomized smoothing on large samples. Specifically, assume $f$ is $L$-Lipschitz, if $\sigma^2 = 2\alpha$, there holds
    \begin{align*}
        \|\tilde{f}_n^{\text{RS}} - \tilde{f}^{\text{FS}}\|_{\infty} \overset{n\to\infty}{\longrightarrow} 0,
    \end{align*}
    where $\sigma^2$ denotes the noise variance of randomized smoothing and $\alpha$ denotes the bandwidth of Fourier smoothing.
\end{lemma}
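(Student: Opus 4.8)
The plan is to separate the statement into a deterministic population-level identity and a stochastic uniform law of large numbers, and to show that the first contributes \emph{zero} error while the second contributes an error that vanishes as $n\to\infty$.

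\emph{Step 1 (population identity).} I would first rewrite the Fourier-smoothed function as a Gaussian convolution. By the convolution theorem applied to Definition~\ref{def: fourier smoothing}, $\tilde f^{\mathrm{FS}} = f * \kappa$, where $\kappa$ is the inverse Fourier transform of the Gaussian low-pass filter $H(w)=e^{-\alpha w^{2}}$; a completion-of-the-square computation gives $\kappa(x) = \tfrac{1}{2\sqrt{\pi\alpha}}\,e^{-x^{2}/(4\alpha)}$, which is exactly the density of $\mathcal N(0,2\alpha)$. On the other side, the population randomized-smoothing operator of Definition~\ref{def: randomized smoothing}, $\tilde f^{\mathrm{RS}}(x)=\mathbb E_{\delta\sim\mathcal N(0,\sigma^{2})}[f(x+\delta)]$, is by definition the convolution of $f$ with the $\mathcal N(0,\sigma^{2})$ density. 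Hence, under the calibration $\sigma^{2}=2\alpha$ the two kernels coincide and $\tilde f^{\mathrm{RS}}(x)=\tilde f^{\mathrm{FS}}(x)$ for every $x$, with no error term. (Everything is carried out in the response coordinate $y\in\bR$, i.e.\ $d=1$.) This also justifies the phrase used elsewhere that Fourier smoothing \emph{is} a Gaussian-kernel convolution.

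\emph{Step 2 (reduction to a uniform LLN and pointwise concentration).} Because $\tilde f^{\mathrm{FS}}=\tilde f^{\mathrm{RS}}$, it remains to show $\|\tilde f^{\mathrm{RS}}_{n}-\tilde f^{\mathrm{RS}}\|_{\infty}\to 0$. For fixed $x$, $\tilde f^{\mathrm{RS}}_{n}(x)-\tilde f^{\mathrm{RS}}(x)=\tfrac1n\sum_{i=1}^{n}\bigl(f(x+\delta_{i})-\mathbb E f(x+\delta)\bigr)$ is an average of i.i.d.\ mean-zero terms; since $\delta\mapsto f(x+\delta)$ is $L$-Lipschitz, Gaussian concentration makes each summand sub-Gaussian with norm $\lesssim L\sigma$ \emph{uniformly in $x$}, so Hoeffding's inequality (Lemma~\ref{lem: Hoeffding's inequality}) yields $\mathbb P(|\tilde f^{\mathrm{RS}}_{n}(x)-\tilde f^{\mathrm{RS}}(x)|\ge t)\le 2e^{-cnt^{2}/(L\sigma)^{2}}$.

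\emph{Step 3 (upgrade to uniformity; the main obstacle).} I would upgrade this pointwise bound by noting that $\tilde f^{\mathrm{RS}}_{n}$ and $\tilde f^{\mathrm{RS}}$ are both $L$-Lipschitz (averaging and expectation preserve the Lipschitz constant), so their difference is $2L$-Lipschitz: on any window $[-R,R]$, a union bound over an $\varepsilon$-net of size $O(R/\varepsilon)$ together with the $2L\varepsilon$ oscillation control gives $\sup_{|x|\le R}|\tilde f^{\mathrm{RS}}_{n}-\tilde f^{\mathrm{RS}}|\to 0$ (in probability, and almost surely after a Borel--Cantelli step with $t=t_{n}\to 0$ chosen so the tail bound is summable). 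The delicate part — what I expect to be the main obstacle — is the supremum over the whole unbounded line, since a net over $\bR$ is infinite. I would control the tail $|x|>R$ either by using that a Lipschitz density is integrable and hence vanishes at infinity, so that both $\tilde f^{\mathrm{RS}}_{n}(x)$ and $\tilde f^{\mathrm{RS}}(x)$ are uniformly small for $|x|$ large once $R$ exceeds $\max_{i\le n}|\delta_{i}|=O(\sigma\sqrt{\log n})$, or, if only boundedness of $f$ is assumed, by an equicontinuity/tightness argument letting $R=R_{n}$ grow slowly with $n$. Combining the compact-window estimate with the tail estimate gives $\|\tilde f^{\mathrm{RS}}_{n}-\tilde f^{\mathrm{RS}}\|_{\infty}\to 0$, which together with Step~1 is the claim.
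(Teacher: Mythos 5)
Your Step~1 is essentially the paper's Step~1: both compute the inverse Fourier transform of $e^{-\alpha w^2}$ by completing the square to obtain the Gaussian kernel $\tfrac{1}{2\sqrt{\pi\alpha}}e^{-s^2/(4\alpha)}$, identify it with the $\mathcal N(0,2\alpha)$ density, and note that population randomized smoothing is convolution against the same density, so that the calibration $\sigma^2=2\alpha$ gives $\tilde f^{\mathrm{RS}}=\tilde f^{\mathrm{FS}}$ identically.

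Where you diverge is the stochastic part, $\|\tilde f_n^{\mathrm{RS}}-\tilde f^{\mathrm{RS}}\|_\infty\to 0$. The paper argues this in two lines: it writes $Z_n(x)=\int(f(x+t)-f(x))(\mu_n-\mu)(dt)$, pushes the absolute value and the $L$-Lipschitz bound $|f(x+t)-f(x)|\le L|t|$ through the signed measure to obtain a right-hand side $\frac1n\sum_i|\delta_i|-\mathbb E|\delta_i|$ that is independent of $x$, and then invokes the law of large numbers. Because the resulting bound does not depend on $x$, the paper gets the supremum over $\mathbb R$ for free, with no net, no tail estimate, and no discussion of concentration rates. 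You instead run a pointwise Hoeffding bound, a union bound over an $\varepsilon$-net on $[-R,R]$ using the $2L$-Lipschitz modulus of $\tilde f_n^{\mathrm{RS}}-\tilde f^{\mathrm{RS}}$, and a separate tail argument for $|x|>R$, and you correctly flag the unbounded supremum as the genuine technical obstacle. Your route is longer but more standard and, in my view, more sound: the paper's inequality chain applies $|\int g\,d\nu|\le\int|g|\,d\nu$ and then $|g|\le L|t|$ to the \emph{signed} measure $\nu=\mu_n-\mu$, neither of which is legitimate without passing to the total-variation measure $|\mu_n-\mu|=\mu_n+\mu$ (the two measures are mutually singular), under which the bound becomes $L(\frac1n\sum|\delta_i|+\mathbb E|\delta_i|)$ and does not vanish. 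So the paper quietly sidesteps exactly the uniformity issue you identify. The cost of your approach is that Step~3 is only a sketch — you offer two candidate tail arguments (decay of a Lipschitz density, or letting $R=R_n$ grow slowly) without carrying either through, and the second requires additional structure on $f$ beyond Lipschitz and boundedness — so you, too, stop short of a complete proof of the uniform statement over all of $\mathbb R$.
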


\begin{proof}
    Let
    \begin{align*}
        K(t-\tau) = \frac{1}{2\pi}\int_{\bR} e^{-\alpha w^2} e^{iw(t-\tau)} dw,
    \end{align*}
    we have
    \begin{align*}
        \tilde{f}^{\text{FS}}(t) = \int_{\bR} f(\tau) K(t-\tau) d\tau.
    \end{align*}
    Obviously, it is a kernel function and we convert the Fourier transform and inverse Fourier transform process into a convolution form. We can calculate this kernel function. Let $s=t-\tau$
    \begin{align*}
        K(s) =& \frac{1}{2\pi}\int_\bR e^{-\alpha w^2} e^{iws} ds \\
        =& \frac{1}{2\pi}\int_\bR e^{-\alpha w^2 + iws}ds  \\
        =& \frac{1}{2\sqrt{\pi \alpha}} \exp\left\{-\frac{s^2}{4\alpha}\right\}.
    \end{align*}
    Therefore, this kernel function is a Gaussian kernel with $\sigma^2 = 2\alpha$. Then let's investigate randomized smoothing. 
    \begin{align*}
        \tilde{f}^{\text{RS}} =& \int_\bR f(x+\delta)p(\delta) d\delta \\
        \overset{t=x+\delta}{=}& \int_\bR f(t) p(t-x) dt \\
        =&  \int_\bR f(t) p(x-t) dt,
    \end{align*}
    where $p(x)$ is the density function of the Gaussian distribution with $\sigma^2$ as the variance. Therefore, we have
    \begin{align*}
        \|\tilde{f}_n^{\text{RS}} - \tilde{f}^{\text{FS}}\|_{\infty} \le \|\tilde{f}_n^{\text{RS}} - \tilde{f}^{\text{RS}}\|_{\infty} + \|\tilde{f}^{\text{RS}} - \tilde{f}^{\text{FS}}\|_{\infty} = \|\tilde{f}_n^{\text{RS}} - \tilde{f}^{\text{RS}}\|_{\infty}.
    \end{align*}
    Let $Z_n(x) = \tilde{f}_n^{\text{RS}}(x) - \tilde{f}^{\text{RS}}(x) = \int_{\bR} f(x+t)(\mu_n - \mu)dt$, where $\mu_n = \frac{1}{n}\sum_{i=1}^n \delta_{\delta_i}$ is an empirical measure and $\mu$ denotes the Gaussian probabilistic measure, $\forall x\in\bR$, we have
    \begin{align*}
        Z_n(x) =& \int_{\bR} f(x+t)(\mu_n - \mu)(dt) \\
        =& \int_{\bR} (f(x+t) - f(x))(\mu_n - \mu)(dt).
    \end{align*}
    Therefore,
    \begin{align*}
        |Z_n(x)| \le& \int_{\bR} |f(x+t) - f(x)|(\mu_n - \mu)(dt)\\
        \le& \int_{\bR} L|t|(\mu_n - \mu)(dt) \\
        =& \frac{1}{n}\sum_{i=1}^n|\delta_i| - \bE|\delta_i|.
    \end{align*}
    Since the upper bound is consistent w.r.t. $x$, by law of large number, we have
    \begin{align*}
        \|\tilde{f}_n^{\text{RS}} - \tilde{f}^{\text{FS}}\|_{\infty} \le \|\tilde{f}_n^{\text{RS}} - \tilde{f}^{\text{RS}}\|_{\infty} = \|Z_n\|_{\infty} \overset{n\to\infty}{\longrightarrow} 0.
    \end{align*}
\end{proof}


\subsubsection{Full Proof of Theorem~\ref{thm: interval length}}
\begin{proof}
    For simplicity, we assume the density function is unimodal at first. Let $\{i_1, \cdots, i_{n_j}\}=\{i: X_i\in A(x_{n+1})\}$, $U_{l} = \widehat{f}(y_{i_l}\mid x_{i_l})$, for $l=1,\cdots, n_j$, and $U_{n_j+1}=\widehat{f}(y_{n+1}\mid x_{n+1})$. Similarly, let $\tilde{U}_{k} = \tilde{f}^{\text{FS}}(y_{i_k}\mid x_{i_k})$, for $k=1,\cdots, n_j$, and $\tilde{U}_{n_j+1}=\tilde{f}^{\text{FS}}(y_{n+1}\mid x_{n+1})$. By Lemma~\ref{lem: bound the randomness of randomized smoothing} and Lemma~\ref{lem: randomized and fourier smoothing are actually gaussian kernel conv}, we have with probability $1-\eta$,
    \begin{align*}
        |U_i - \tilde{U_i}| < L\sigma \sqrt{\frac{2}{\pi}},\quad \forall i\in 1, \cdots, n_j+1.
    \end{align*}
    Let $t$ be the oracle original quantile threshold and $\varepsilon = L\sigma \sqrt{2/\pi}$, we have
    \begin{align*}
        \bP\left(\tilde{U}_{n+1} \ge t-\varepsilon \mid x_{n+1}\right) \overset{(i)}{\ge}& \bP\left( U_{n+1} \ge t \mid x_{n+1} \right) \\
        \overset{(ii)}{=}& \bP\left(U_i \ge t\mid x_i\right) \\
        \ge& 1-\alpha,
    \end{align*}
    where (i) holds since the error control of $U_l$ and (ii) follows from the definition of the profile of the density. Therefore, let $g = f + \varepsilon$. Since the density function is unimodel, $f(y_{n+1}\mid x_{n+1}) = t$ only has two solution. Denote $[\ell, u] = \{y: f(y\mid x_{n+1}) \ge t\}$ and $[\tilde{\ell}, \tilde{u}] = \{y: g(y\mid x_{n+1}) \ge t - \varepsilon\}$, there holds
    \begin{align*}
        |\tilde{l} - l| \le \left| |\tilde{\ell}- \tilde{u}|  - |\ell- u|\right| \le \left| \tilde{\ell} - \ell \right| + \left|\tilde{u} - u\right| \overset{(i)}{\le} 4\frac{|f-g|}{M}
        = \frac{4\varepsilon}{M} = \frac{4L\sigma}{M} \sqrt{\frac{2}{\pi}},
    \end{align*}
    where (i) holds since $\left|\widehat{f}(y_1\mid x) - \widehat{f}(y_2\mid x) \right| \ge M \left|y_1 - y_2\right|.$ Since the number of disconnected intervals returned by CD-split is $N$ and the result in Theorem~\ref{thm: interval number} demonstrates that the number of disconnected intervals doesn't increase, the result of the multimodel distribution is as follows
    \begin{align}
        |\tilde{l} - l| \le \frac{4NL\sigma}{M} \sqrt{\frac{2}{\pi}}.
    \end{align}
\end{proof}

\subsection{Another special case on reducing interval number: high--frequency small–amplitude perturbations}
\label{appendix:D}

The following result shows that when the disconnected components of \(A_t\) are
created \emph{solely} by a high–frequency oscillatory perturbation, Gaussian‐kernel
Fourier smoothing suppresses those oscillations and thus strictly reduces the
component count.

\begin{definition}[\((\sigma,t)\)–HF\,perturbation]\label{def:hf}
Let \(g:\bR\to\bR\) be bounded, measurable and differentiable and fix
\(t\in\mathcal T\).
Set the safety gap
\[
\Delta(t,g)\;:=\;\inf_{x\in\bR}\bigl|g(x)-t\bigr|\;\in(0,\infty).
\tag{\(\dagger\)}
\]
For parameters \(\varepsilon>0,\;k>0,\;\sigma>0\) we call  
\[
f(x)\;:=\;g(x)\;+\;\varepsilon\,\sin\!\bigl(kx\bigr)
\]
an \emph{\((\sigma,t)\)--high-frequency perturbation} of \(g\) if
\begin{enumerate}
\item[\textnormal{(i)}] \(\Delta(t,g)<\varepsilon\) \quad(the oscillation amplitude is large
enough to cross the threshold); 
\item[\textnormal{(ii)}] the attenuated amplitude after Gaussian convolution,
      \[
      \varepsilon_\sigma\;:=\;
      \varepsilon\,e^{-2\pi^{2}\sigma^{2}k^{2}},
      \]
      satisfies \(\varepsilon_\sigma<\Delta(t,g)\) \quad(the residual amplitude
      is too small to cross \(t\)).
\end{enumerate}
\end{definition}

\begin{theorem}[Strict reduction for an HF perturbation]\label{thm:hf-reduction}
Adopt the setting of Definition~\ref{def:hf} and write
\(A_t(f)=\bigsqcup_{j=1}^{M}(a_j,b_j)\) and  
\(B_t(f)=\bigsqcup_{j=1}^{M'}(a'_j,b'_j)\) for
\(f(x)=g(x)+\varepsilon\sin(kx)\) and its Fourier‐smoothed version
\(\tilde f^{\mathrm{FS}}=T_\sigma[f]=\phi_\sigma*f\), respectively.
Then
\[
M'\;=\;M\bigl(v(g)\bigr)
\quad\text{and}\quad
M'\;<\;M,
\]
hence the number of disconnected intervals strictly decreases.
\end{theorem}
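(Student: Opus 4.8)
\textbf{Proof proposal for Theorem~\ref{thm:hf-reduction}.}

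The plan is to exploit the additive structure \(f=g+\varepsilon\sin(kx)\) together with the linearity of the Gaussian convolution operator \(T_\sigma=\phi_\sigma*(\cdot)\). The key observation is that \(T_\sigma\) acts on the pure sinusoid as a scalar multiplier: since the Fourier multiplier associated with \(\phi_\sigma\) is \(H_\sigma(w)=e^{-2\pi^2\sigma^2 w^2}\), one has \(T_\sigma[\sin(kx)]=e^{-2\pi^2\sigma^2 k^2}\sin(kx)=e^{-2\pi^2\sigma^2 k^2/(2\pi)^2}\sin(kx)\) — I would first pin down the exact attenuation factor consistent with Definition~\ref{def:fourier_smoothing}'s normalization (this is a routine but necessary bookkeeping step, and I would adopt whatever convention makes \(\varepsilon_\sigma=\varepsilon e^{-2\pi^2\sigma^2 k^2}\) as in Definition~\ref{def:hf}). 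Consequently \(\tilde f^{\mathrm{FS}}=T_\sigma[g]+\varepsilon_\sigma\sin(kx)\). The first main step is therefore to write \(\tilde f^{\mathrm{FS}}\) in this decomposed form and observe that, by hypothesis (ii), \(\varepsilon_\sigma<\Delta(t,g)\le\inf_x|g(x)-t|\), so that \(|\tilde f^{\mathrm{FS}}(x)-t|\ge|g(x)-t|-\varepsilon_\sigma>0\) — wait, this requires also controlling \(T_\sigma[g]-g\). A cleaner route: assume (as the definition's safety‐gap \((\dagger)\) suggests is intended) that \(g\) itself already has the simple structure whose super‑level set \(\{g\ge t\}\) has exactly \(M(v(g))\) components, and that \(\|T_\sigma[g]-g\|_\infty\) is negligible relative to \(\Delta(t,g)-\varepsilon_\sigma\); more honestly I would add the hypothesis \(\|T_\sigma[g]-t\|\) stays on one side of \(t\) wherever \(g\) does, which holds whenever \(g-t\) has constant sign on each of its intervals with margin exceeding the smoothing error. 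Under this, \(\tilde f^{\mathrm{FS}}(x)-t\) has the same sign as \(g(x)-t\) everywhere, so \(\{x:\tilde f^{\mathrm{FS}}(x)\ge t\}=\{x:g(x)\ge t\}\), which has \(M'=M(v(g))\) disconnected intervals.

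The second main step is to show \(M'<M\), i.e.\ that the unsmoothed \(f=g+\varepsilon\sin(kx)\) genuinely has more components than \(g\). Here hypothesis (i), \(\Delta(t,g)<\varepsilon\), is the crucial input: on any interval where \(g(x)<t\) but \(g(x)>t-\varepsilon\) (which is nonempty precisely because the infimal gap is less than \(\varepsilon\)), the oscillation \(\varepsilon\sin(kx)\) repeatedly pushes \(f\) above and below \(t\), so \(\{f\ge t\}\) acquires additional connected pieces there — at least one extra crossing pair, hence \(M\ge M'+1\). I would make this rigorous by the intermediate value theorem applied on sub‑intervals of length \(\pi/k\) inside the "near‑threshold" region, counting sign changes of \(f-t\) via the number‑of‑sign‑variations machinery from Section~\ref{append: C} (indeed \(v(f)\ge v(g)+2\) once the high‑frequency term dominates locally, and Theorem~\ref{thm: interval number}'s formula \(M=\lfloor(v(f)+1)/2\rfloor\) converts this to the component count). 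Combining: \(M=\lfloor(v(f)+1)/2\rfloor>\lfloor(v(g)+1)/2\rfloor=M'\).

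The main obstacle I anticipate is the \emph{control of \(T_\sigma[g]-g\)}: the theorem as stated asserts the clean identity \(M'=M(v(g))\), but smoothing \(g\) by \(\phi_\sigma\) with a potentially large \(\sigma\) (needed to kill the \(\sin(kx)\) term) can in principle also alter the super‑level set geometry of \(g\) itself — fill in its own narrow valleys, or shave its low bumps — so \(\{T_\sigma[g]\ge t\}\) need not equal \(\{g\ge t\}\). The honest fix is to either (a) strengthen Definition~\ref{def:hf} to require that the safety gap \(\Delta(t,g)\) dominates \(\|T_\sigma[g]-g\|_\infty\) as well (a Lipschitz‑type bound: \(\|T_\sigma[g]-g\|_\infty\le L_g\sigma\sqrt{2/\pi}\) by the argument of Lemma~\ref{lem: bound the randomness of randomized smoothing}), which makes the sign‑preservation argument of Step~1 rigorous, or (b) interpret \(M(v(g))\) as \(\lfloor(v(T_\sigma[g])+1)/2\rfloor\) and invoke only the variation‑diminishing property \(v(T_\sigma[g])\le v(g)\) from Lemma~\ref{lem: conditions for variation-diminishing}, giving \(M'\le M(v(g))\le M'\) once one also shows the oscillation cannot have destroyed components of \(\{g\ge t\}\) — which again reduces to the near‑threshold analysis. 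I would present the proof under assumption (a), flagging it as the natural reading of the definition, and then note the weaker conclusion obtainable without it.
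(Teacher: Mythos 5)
Your proposal follows essentially the same route as the paper's proof: decompose \(\tilde f^{\mathrm{FS}}=T_\sigma[g]+\varepsilon_\sigma\sin(kx)\) by linearity and the Gaussian Fourier multiplier, show the safety gap \((\dagger)\) locks the sign of \(\tilde f^{\mathrm{FS}}-t\) to that of \(g-t\) (so \(M'=M(v(g))\)), and argue that before smoothing the oscillation creates extra sign changes of \(f-t\) in the near‑threshold region, yielding \(M>M'\) via the sign‑variation count from Appendix~\ref{append: C}.

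You were right to worry about \(T_\sigma[g]\) versus \(g\): the paper's Step 3 writes \(\tilde f^{\mathrm{FS}}(x)=g(x)+\varepsilon_\sigma\sin(kx)\), tacitly treating \(T_\sigma[g]\) as \(g\), which is not true for general \(g\). But your proposed patch (a) — requiring \(\|T_\sigma[g]-g\|_\infty\lesssim L_g\sigma\) to stay below the gap — is stronger than needed and actually at odds with the regime of interest, since killing the \(\sin(kx)\) term requires \(\sigma\) large, which inflates that bound. The cleaner fix uses only the hypothesis already present: since \(g\) is differentiable (hence continuous) and \(\inf_x|g(x)-t|=\Delta(t,g)>0\), the function \(g-t\) has a single fixed sign with uniform margin \(\Delta(t,g)\); and because \(\phi_\sigma\ge0\) with \(\int\phi_\sigma=1\), convolution is an average that preserves this one‑sided margin,
\[
\bigl|T_\sigma[g](x)-t\bigr|=\Bigl|\int_{\bR}\phi_\sigma(x-s)\bigl(g(s)-t\bigr)\,ds\Bigr|\;\ge\;\Delta(t,g)\;>\;\varepsilon_\sigma,
\]
so no separate control of \(\|T_\sigma[g]-g\|_\infty\) is required, and \(\operatorname{sgn}(\tilde f^{\mathrm{FS}}-t)=\operatorname{sgn}(g-t)\) follows directly. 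Your fallback (b) via variation‑diminishing gives only \(v(T_\sigma[g])\le v(g)\) and cannot by itself recover equality. Finally, one step neither you nor the paper makes fully rigorous: the claim that hypothesis~(i) forces extra crossings implicitly assumes the near‑threshold set \(\{x:|g(x)-t|<\varepsilon\}\) contains a full oscillation period \(2\pi/k\); this is the unstated content of ``\(k\) high‑frequency,'' and both treatments leave it implicit.
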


\begin{proof}
\textbf{Step 1:}\ 
Because \(\Delta(t,g)>0\), the function \(g\) stays uniformly away from the
threshold, so \(g(x)-t\) keeps a fixed sign.
Consequently \(v(g)=0\) and
\(M\bigl(v(g)\bigr)=\lfloor(0+1)/2\rfloor=0\) or \(1\).
Denote this value by \(M_g\).

\textbf{Step 2: creation of extra components before smoothing.}
Since \(\varepsilon>\Delta(t,g)\), the oscillatory term produces at least two
distinct roots of \(f(x)-t\) within every interval of length \(2\pi/k\) where
\(|g(x)-t|<\varepsilon\).  Hence \(v(f)\ge2\) and
\(M\ge\lfloor(2+1)/2\rfloor=1+M_g\); in particular \(M>M_g\).

\textbf{Step 3: destruction of the extra components after smoothing.}
Because \(\tilde f^{\mathrm{FS}}(x)=g(x)+\varepsilon_\sigma\sin(kx)\) and
\(\varepsilon_\sigma<\Delta(t,g)\), we have
\(\operatorname{sgn}\bigl(\tilde f^{\mathrm{FS}}(x)-t\bigr)
  =\operatorname{sgn}\bigl(g(x)-t\bigr)\) for all \(x\).
Thus \(v(\tilde f^{\mathrm{FS}})=v(g)\) and
\(M'=\lfloor(v(g)+1)/2\rfloor=M_g\).

\textbf{Step 4: comparison.}\ 
Combining Steps 2 and 3 gives \(M'<M\), completing the proof.
\end{proof}

\section{Experiment details}
\label{appendix:experiment}

\subsection{Experiment settings and results}

\textbf{Synthetic data.} First we introduce the simple case. The covariate vector $X = (X_1, \ldots, X_5)$ is sampled \emph{i.i.d.} from $\text{Unif}(-5, 5)$ and standardized, and the response variable Y given X follows
\[
Y \mid X \sim \frac{1}{3} \mathcal{N}(0 + 0.1 X_1, 0.2^2) + \frac{1}{3} \mathcal{N}(1.0 + 0.1 X_1, 0.2^2) + \frac{1}{3} \mathcal{N}(2.0 + 0.1 X_1, 0.2^2).
\]
Second, we introduce the complex case. The covariate vector $X = (X_1, \ldots, X_5)$ is sampled \emph{i.i.d.} from $\mathcal{N}(0, 1)$ and standardized, and the response variable Y given X follows
\[
Y \mid X \sim \sum_{k=1}^K \frac{\exp(X^\top \beta_k)}{\sum_{j=1}^K \exp(X^\top \beta_j)} \, \mathcal{N}(\mu_{\text{base},k} + X^\top \gamma_k, \sigma_k^2),
\]
where K = 7 is the number of Gaussian mixture components, and the base means and standard deviations are set as
\[
\mu_{\text{base}} = (-15,\ -10,\ -5,\ 0,\ 5,\ 10,\ 15), \quad
\sigma = (1,\ 1.2,\ 1.5,\ 1,\ 1.5,\ 1.2,\ 1).
\]
The coefficient matrices $\beta_k \in \mathbb{R}^d$ and $\gamma_k \in \mathbb{R}^d$ are randomly initialized with entries drawn from $\mathcal{N}(0, 1)$ and $\mathcal{N}(0, 0.5^2)$, respectively, and they control the conditional mixture weights and component-wise shifts in the conditional means.

\textbf{Real-world data.} We evaluate our method on two widely used real-world datasets. The \texttt{Bike Sharing} dataset contains 10,886 samples and 18 variables. The \texttt{Bio} dataset comprises 45,730 samples and 9 variables.

\textbf{Experimental setup.} For all experiments, we randomly draw 2,000 samples for conformal prediction (equally divided between training and calibration sets) and 5,000 samples for testing. All features are standardized before model fitting. We use random forest as the base model for conditional density estimation in all the experiments. Each experiment is repeated across 10 independent trials to ensure statistical reliability. All experiments are conducted using standard CPU environments without the need for GPUs, with modest runtime requirements well within a practical and reproducible range.

\begin{table}[t]
\centering
\small
\caption{Results on Synthetic Datasets}
\label{tab:synthetic}
\resizebox{\textwidth}{!}{
\begin{tabular}{p{1.7cm}*{6}{c}}
\toprule
& \multicolumn{3}{c}{\texttt{simple}} & \multicolumn{3}{c}{\texttt{complex}}  \\
\cmidrule(lr){2-4} \cmidrule(lr){5-7} 
Method & Cov. & Len. & Num. & Cov. & Len. & Num.  \\
\midrule
Vanilla CP & 90.32 $\pm$ 0.95 & 2.45 $\pm$ 0.03 & 1.00 $\pm$ 0.00 & 89.98 $\pm$ 0.59 & 28.70 $\pm$ 0.69  & 1.00 $\pm$ 0.00 \\
CQR        & 90.28 $\pm$ 0.76 & 2.44 $\pm$ 0.02 & 1.00 $\pm$ 0.00 & 90.36 $\pm$ 0.50  & 27.48 $\pm$ 1.55  & 1.00 $\pm$ 0.00 \\
Local CP   & 89.79 $\pm$ 0.89 & 2.46 $\pm$ 0.03 & 1.00 $\pm$ 0.00 & 90.14 $\pm$ 0.69 & 27.06 $\pm$ 1.17  & 1.00 $\pm$ 0.00 \\
\midrule
DIST       & 89.60 $\pm$ 0.91  & 2.45 $\pm$ 0.02 & 1.00 $\pm$ 0.00 & 90.45 $\pm$ 2.64 & 23.20 $\pm$ 3.99 & 1.00 $\pm$ 0.00 \\
CD-split   & 89.52 $\pm$ 1.02  & \textbf{2.23} $\pm$ 0.05 & 2.60 $\pm$ 0.12& 91.06 $\pm$ 3.55  & 21.76 $\pm$ 6.74 & 2.85 $\pm$ 1.18 \\
HPD-split  & 89.93 $\pm$ 0.85  & 2.25 $\pm$ 0.04 & 2.71 $\pm$ 0.71 & 92.66 $\pm$ 4.89  & 23.32 $\pm$ 8.86 & 3.04 $\pm$ 1.60 \\
\textbf{SCD-split}  & 89.09 $\pm$ 0.90  & 2.37 $\pm$ 0.04 & \textbf{2.00} $\pm$ 0.06 & 89.39 $\pm$ 0.85 & \textbf{16.11} $\pm$ 0.82 & \textbf{1.99} $\pm$ 0.10 \\
\bottomrule
\end{tabular}
}
\end{table}

\begin{table}[t]
\centering
\small
\caption{Results on Real-world Datasets}
\label{tab:realworld}
\resizebox{\textwidth}{!}{
\begin{tabular}{p{1.7cm}*{6}{c}}
\toprule
& \multicolumn{3}{c}{\texttt{bio}} & \multicolumn{3}{c}{\texttt{bike}} \\
\cmidrule(lr){2-4} \cmidrule(lr){5-7}
Method & Cov. & Len. & Num. & Cov. & Len. & Num. \\
\midrule
Vanilla CP & 90.33 $\pm$ 0.85 & 2.12 $\pm$ 0.12 & 1.00 $\pm$ 0.00 & 90.20 $\pm$ 1.11 & 1.23 $\pm$ 0.14 & 1.00 $\pm$ 0.00\\ 
CQR        & 89.70 $\pm$ 0.91 & 1.64 $\pm$ 0.11 & 1.00 $\pm$ 0.00 & 89.87 $\pm$ 0.92 & 0.98 $\pm$ 0.09  & 1.00 $\pm$ 0.00\\
Local CP   & 90.06 $\pm$ 0.67 & 1.90 $\pm$ 0.11 & 1.00 $\pm$ 0.00 & 89.79 $\pm$ 0.95 & 1.01 $\pm$ 0.08  & 1.00 $\pm$ 0.00\\
\midrule
DIST       & 90.23 $\pm$ 2.17 & 1.90 $\pm$ 0.22 & 1.00 $\pm$ 0.00& 89.28 $\pm$ 1.41 & 0.31 $\pm$ 0.01 & 1.00 $\pm$ 0.00\\
CD-split   & 96.59 $\pm$ 2.39 & 2.29 $\pm$ 0.25 & 1.36 $\pm$ 0.55 & 86.76 $\pm$ 1.11 & 0.22 $\pm$ 0.01 & 1.07 $\pm$ 0.01\\
HPD-split  & 98.76 $\pm$ 3.37 & 2.61 $\pm$ 0.29 & 1.37 $\pm$ 1.19  & 89.18 $\pm$ 0.84 & \textbf{0.20} $\pm$ 0.01 & 1.11 $\pm$ 0.02 \\
\textbf{SCD-split}  & 89.23 $\pm$ 0.95 & \textbf{1.52} $\pm$ 0.06 & \textbf{1.49} $\pm$ 0.05 & 89.00 $\pm$ 1.34 & 0.25 $\pm$ 0.01& \textbf{1.01} $\pm$ 0.01 \\
\bottomrule
\end{tabular}
}
\end{table}

\begin{table}[t]
\centering
\caption{Different smoothing techniques on synthetic complex dataset}
\label{tab:different smoothing techniques}
\begin{tabular}{lccc}
\toprule
Method & Coverage (\%) & Length & Number of Intervals \\
\midrule
Original & 91.06 $\pm$ 3.55 & 21.76 $\pm$ 6.74 & 2.85 $\pm$ 1.18 \\
Fourier   & 89.23 $\pm$ 0.77 & \textbf{16.11} $\pm$ 0.68 & 1.99 $\pm$ 0.01 \\
Gaussian kernel   & 89.40 $\pm$ 0.87 & 16.90 $\pm$ 1.56 & 1.95 $\pm$ 0.05 \\
Spline   & 89.30 $\pm$ 0.78 & 16.37 $\pm$ 0.81 & 1.95 $\pm$ 0.12 \\
LOESS   & 89.46 $\pm$ 0.80 & 16.95 $\pm$ 2.11 & 1.98 $\pm$ 0.01\\

\bottomrule
\end{tabular}

\end{table}

\subsection{Other analysis on experiments}
\label{appendix:other analysis}


\textbf{Choice of loss for validation.}
In Table~\ref{tab:loss function},
we evaluate performance on the validation set using different loss function for each candidate smoothing parameter \(\sigma\). Here we consider four loss functions:
\[
\begin{aligned}
R_{\mathrm{Global\text{-}L1}}(\sigma)&=\Bigg|\frac{1}{|\mathcal{D}_{\mathrm{val}}|}\sum_{(\mathbf{X}_{j},Y_{j})\in\mathcal{D}_{\mathrm{val}}} N_{\sigma}(\mathbf{X}_{j})-K_{\mathrm{target}}\Bigg|,\\
R_{\mathrm{Global\text{-}L2}}(\sigma)&=\Bigg(\frac{1}{|\mathcal{D}_{\mathrm{val}}|}\sum_{(\mathbf{X}_{j},Y_{j})\in\mathcal{D}_{\mathrm{val}}} N_{\sigma}(\mathbf{X}_{j})-K_{\mathrm{target}}\Bigg)^{2},\\
R_{\mathrm{MAE}}(\sigma)&=\frac{1}{|\mathcal{D}_{\mathrm{val}}|}\sum_{(\mathbf{X}_{j},Y_{j})\in\mathcal{D}_{\mathrm{val}}}\big|N_{\sigma}(\mathbf{X}_{j})-K_{\mathrm{target}}\big|,\\
R_{\mathrm{MSE}}(\sigma)&=\frac{1}{|\mathcal{D}_{\mathrm{val}}|}\sum_{(\mathbf{X}_{j},Y_{j})\in\mathcal{D}_{\mathrm{val}}}\big(N_{\sigma}(\mathbf{X}_{j})-K_{\mathrm{target}}\big)^{2}.
\end{aligned}
\]
For \(L_1\)-type and \(L_2\)-type losses, the table reports these losses as functions of \(\sigma\) and demonstrates that they exhibit the same trend across all candidates, leading to the same choice of \(\hat{\sigma}\). Hence, the selection of the smoothing parameter is insensitive to whether \(L_1\) or \(L_2\) is used. In our main algorithm, we therefore adopt the \(L_1\) loss for validation. \emph{Moreover}, the distinction between \emph{Global} (outer) and \emph{Inner} (MAE/MSE) aggregation reflects different goals: the global losses aim to match the target number of intervals in an average sense, whereas the inner losses measure how close each individual prediction set is to the target. Therefore, one can also adopt MAE or MSE and correspondingly tune the smoothing parameter if the goal is to make the number of intervals for every single test point as close as possible to \(K_{\mathrm{target}}\).

\begin{table}[t]
\centering
\caption{Different loss function on synthetic complex dataset}
\label{tab:loss function}
\begin{tabular}{lcccc}
\toprule
Method / $\sigma$ & Global L1 & Global L2 & MAE & MSE  \\
\midrule
CD-split ($\sigma=0$)    & 0.80 $\pm$ 0.29 & 0.93 $\pm$ 1.08 & 1.12 $\pm$ 0.14 & 2.06 $\pm$ 2.02\\
SCD-split ($\sigma=1$)   & 0.38 $\pm$ 0.06 & 0.21 $\pm$ 0.04 & 0.77 $\pm$ 0.03 & 1.09 $\pm$ 0.17 \\
SCD-split ($\sigma=1.5$) & \textbf{0.15} $\pm$ 0.03 & \textbf{0.05} $\pm$ 0.01 & 0.59 $\pm$ 0.00 & 0.66 $\pm$ 0.00 \\
SCD-split ($\sigma=2$)   & 0.33 $\pm$ 0.02 & 0.13 $\pm$ 0.01 & \textbf{0.56} $\pm$ 0.00 & \textbf{0.57} $\pm$ 0.00 \\
SCD-split ($\sigma=5$)   & 0.81 $\pm$ 0.00 & 0.67 $\pm$ 0.00 & 0.81 $\pm$ 0.00 & 0.81 $\pm$ 0.00 \\
SCD-split ($\sigma=10$)  & 1.00 $\pm$ 0.00 & 1.00 $\pm$ 0.00 & 1.00 $\pm$ 0.00 & 1.00 $\pm$ 0.00 \\
\bottomrule
\end{tabular}
\end{table}

\textbf{Stability Across Trials.} We further evaluate the robustness of each method by examining the variability of performance across multiple random trials. Specifically, we measure the standard deviation of coverage, interval length, and interval number across different runs. Our results show that after applying smoothing techniques, the standard deviations of all three metrics are consistently reduced across both synthetic and real-world datasets. This indicates that smoothing not only improves the interpretability and efficiency of the prediction sets, but also enhances the stability of the predictions, making the results more reliable under different random splits or sampling variations.

\end{document}